\newcommand{\cv}{\textbf{c}}
\newcommand{\dv}{\textbf{d}}
\newcommand{\pv}{\textbf{p}}
\newcommand{\qv}{\textbf{q}}
\newcommand{\uv}{\textbf{u}}
\newcommand{\vv}{\textbf{v}}
\newcommand{\xv}{\textbf{x}}
\newcommand{\Deltam}{\mbox{\boldmath$\Delta$}}
\newcommand{\Gammam}{\mbox{\boldmath$\Gamma$}}
\newcommand{\Omegam}{\mbox{\boldmath$\Omega$}}
\newcommand{\Lambdam}{\mbox{\boldmath$\Lambda$}}
\newcommand{\Sigmam}{\mbox{\boldmath$\Sigma$}}
\newcommand{\Pim}{\mbox{\boldmath$\Pi$}}
\newcommand{\Thetam}{\mbox{\boldmath$\Theta$}}
\newcommand{\diag}{\mbox{diag}}
\newcommand{\tr}{\mbox{tr}}
\newcommand{\Am}{\textbf{A}}
\newcommand{\Bm}{\textbf{B}}
\newcommand{\Cm}{\textbf{C}}
\newcommand{\Dm}{\textbf{D}}
\newcommand{\Em}{\textbf{E}}
\newcommand{\Fm}{\textbf{F}}
\newcommand{\Gm}{\textbf{G}}
\newcommand{\Hm}{\textbf{H}}
\newcommand{\Imat}{\textbf{I}}
\newcommand{\Km}{\textbf{K}}
\newcommand{\Lm}{\textbf{L}}
\newcommand{\Mm}{\textbf{M}}
\newcommand{\Pm}{\textbf{P}}
\newcommand{\Qm}{\textbf{Q}}
\newcommand{\Rm}{\textbf{R}}
\newcommand{\Sm}{\textbf{S}}
\newcommand{\Tm}{\textbf{T}}
\newcommand{\Um}{\textbf{U}}
\newcommand{\Vm}{\textbf{V}}
\newcommand{\Wm}{\textbf{W}}
\newcommand{\Xm}{\textbf{X}}
\newcommand{\Ym}{\textbf{Y}}
\newcommand{\Zm}{\textbf{Z}}
\newtheorem{theorem}{Theorem}[section]
\newtheorem{lemma}{Lemma}[section]
\newtheorem{definition}{Definition}[section]
\begin{document}

\title{Efficient Eigen-updating for Spectral Graph Clustering}

\author[1, *]{Charanpal Dhanjal} 
\author[2]{Romaric Gaudel} 
\author[3]{St\'ephan Cl\'{e}men\c{c}on}

\affil[1]{LIP6, UPMC, 4 Place Jussieu, 75252 Paris Cedex 05, France} 
\affil[2]{Universit\'{e} Lille 3, Domaine Universitaire du Pont de Bois, 59653 Villeneuve d'Ascq Cedex, France}
\affil[3]{Telecom ParisTech, 46 rue Barrault, 75634 Paris Cedex 13, France}

\date{\today}
\maketitle

\let\oldthefootnote\thefootnote
\renewcommand{\thefootnote}{\fnsymbol{footnote}}
\footnotetext[1]{Author for correspondence (charanpal.dhanjal@lip6.fr)}
\let\thefootnote\oldthefootnote

\begin{abstract}
Partitioning a graph into groups of vertices such that those within each group are more densely connected than vertices assigned to different groups, known as \textit{graph clustering}, is often used to gain insight into the organisation of large scale networks and for visualisation purposes. Whereas a large number of dedicated techniques have been recently proposed for static graphs, the design of \textit{on-line} graph clustering methods tailored for evolving networks is a challenging problem, and much less documented in the literature. Motivated by the broad variety of applications concerned, ranging from the study of biological networks to the analysis of networks of scientific references through the exploration of communications networks such as the World Wide Web, it is the main purpose of this paper to introduce a novel, computationally efficient, approach to graph clustering in the evolutionary context. Namely, the method promoted in this article can be viewed as an incremental eigenvalue solution for the spectral clustering method described by Ng. \textit{et al.} (2001). The incremental eigenvalue solution is a general technique for finding the approximate eigenvectors of a symmetric matrix given a change. As well as outlining the approach in detail, we present a theoretical bound on the quality of the approximate eigenvectors using perturbation theory. We then derive a novel spectral clustering algorithm called \textit{Incremental Approximate Spectral Clustering} (IASC). The IASC algorithm is simple to implement and its efficacy is demonstrated on both synthetic and real datasets modelling the evolution of a HIV epidemic, a citation network and the purchase history graph of an e-commerce website. 
\end{abstract}

\section{Introduction}

\textit{Graph-mining} has recently received increasing attention in the machine-learning literature, motivated by application domains such as the Internet, social networks, epidemics of transmissible infectious diseases, sensor and biological networks. A crucial task in exploratory analysis and data visualisation is \textit{graph clustering} \cite{schaeffer2007graph, fortunato2010community}, which aims to partition the vertices in a graph into groups or clusters, with dense internal connections and few connections between each other. There is a large body of work on graph clustering. A possible approach is to consider a certain measure that quantifies community structure and formulate the clustering issue as an optimisation problem (which is generally NP-hard), for which fairly good solutions can be obtained recursively or by using adequate metaheuristics, see \cite{newman2004finding, newman04fastAlg, flake2004graph, satuluri2009scalable} for example. A variety of approaches to graph clustering exist, such as those based on modularity maximisation for instance, see \cite{newman2004finding, newman2006modularity}. In this paper, focus is on the \textit{spectral clustering} approach \cite{von2007tutorial}, which performs well empirically, is often simple to implement and benefits computationally from the availability of fast linear algebra libraries. The general principle of spectral clustering is to compute the smallest eigenvectors of some particular matrix $\Lm$ (refer to Section \ref{sec:spectral_clustering} for further details) and then cluster the vertices based on their representation in the corresponding eigen-space. The popularity of this approach arises from the fact that the obtained clustering of vertices is closely connected to the spectral relaxation of the minimisation of the normalised cut criterion, see \cite{shi2000normalized}.

In many applications such as communications networks (e.g. the Web and Internet), biological networks (of proteins, metabolic reactions, \textit{etc.}), social networks or networks of scientific citations for instance, the graphs of interest slowly change with time. A naive approach to this incremental problem is to cluster each graph in the sequence separately, however this is computationally expensive for spectral clustering as the cost of solving the eigenvalue problem is $\mathcal{O}(n^3)$ at each iteration, where $n$ is the number of vertices. There has been some previous work on the incremental spectral clustering problem, for example \cite{valgren2007incremental, ning2010incremental,ning2007incremental, kong2011fast} however only \cite{ning2010incremental,ning2007incremental} update the eigen-system. In this paper we propose an efficient method for clustering a sequence of graphs which leverages the eigen-decomposition and the clustering on the previous graph to find the new clustering. Firstly, a fast approximation of a rank-$k$ eigen-decomposition of $\Lm_{t+1}$ knowing that of $\Lm_t$ is derived from the Singular Value Decomposition (SVD) updating approach used for Latent Semantic Indexing in \cite{zha1999updating}. Here, the update is efficient to compute when the change (defined in the subsequent analysis) between $\Lm_t$ and $\Lm_{t+1}$ is small. Secondly the clustering of vertices is updated according to the new eigen-space. The efficiency of the complete approach, in terms of clustering accuracy, is demonstrated on synthetic and real data. We point out that the clustering approach was first outlined in \cite{dhanjal11cluster}. Here we provide a theoretical analysis on the quality of the eigen-approximation, as well as a more extensive empirical study of the algorithm.  
\medskip

\par The paper is organised as follows. Standard spectral clustering and SVD updating approaches are recalled in Sections \ref{sec:spectral_clustering} and \ref{sec:svd_update}. Then Section \ref{sec:our_work} details the proposed eigen-decomposition update and Section \ref{sec:quality}   studies the accuracy of the resulting approximate eigenvectors using perturbation theory. In Section \ref{sec:incrementalCluster} we show how the eigen-decomposition updates can be applied to spectral clustering. Numerical results are gathered in Section \ref{sec:exp}, and the paper ends with Section \ref{sec:conclusion} discussing results and planned future work.
\medskip

\noindent\textbf{Notation}: A bold uppercase letter represents a matrix, e.g. $\Xm$, and a column vector is displayed using a bold lowercase letter, e.g. $\xv$. The transpose of a matrix or vector is written $\Xm^T$. The concatenation of the columns of two matrices $\Am$ and $\Bm$ is written $[\Am \quad \Bm]$. $\Am[I, I]$ represent the submatrix of $\Am$ formed using the row and columns indexed by $I$, and $\Am[I, :]$ and $\Am[:, I]$ are submatrices formed using the rows and columns $I$ respectively. The matrix $\Am_k$ is that formed using the largest $k$ eigenvectors and eigenvalues of $\Am$, however $\Imat_p$ is the $p\times p$ identity matrix.  

\section{Graph Clustering}
\label{sec:spectral_clustering}

Consider an undirected graph $G = (V, E)$, composed of a set of vertices $V=\{v_1, \ldots, v_n\}$ and edges $E \subseteq V \times V$ such that for every edge $(v_i, v_j)$ there is also an edge $(v_j, v_i)$. One way of representing the edges of $G$ is using an \emph{adjacency matrix} $\Am \in \{0, 1\}^{n \times n}$ which has $\Am_{ij} = 1$ if there is an edge from the $i$-th to the $j$-th vertex and $\Am_{ij} = 0$ otherwise. More generally, the \emph{weight matrix} $\Wm \in \mathbb{R}^{n \times n}$ allows one to assign nonzero numerical values on edges, and thus $\Wm_{ij} \neq 0$ when there is an edge  from the $i$-th to the $j$-th vertex.

In the perspective of spectral clustering, a useful way of representing $G$ is through its Laplacian matrix \cite{chung1997spectral}. There are several definitions of the Laplacian matrix, however we are interested in the \emph{normalised Laplacian matrix}, which is symmetric and positive semi-definite. We recall its definition below for clarity.

\begin{definition} 
The \emph{unnormalised Laplacian matrix} of a graph $G$ is defined as $\Lm = \Dm - \Wm$ where $\Dm$ is the degree matrix with supposedly nonzero diagonal entries $\Dm_{ii} = deg(v_i)$, denoting by $deg(v_i)=\sum_{j}\Wm_{ij}$ the degree of the $i$-th vertex,  and zeros elsewhere, and $\Wm$ is the weight matrix. The normalised Laplacian matrix of a graph $G$ is then defined as 
$$
\tilde{\Lm} = \Dm^{-\frac{1}{2}}\Lm\Dm^{-\frac{1}{2}}.$$
\end{definition}

The normalised Laplacian matrix is used in the spectral clustering approach of Ng et al. \cite{ng2001spectral}. The algorithm computes the Laplacian and then finds the $k$ smallest eigenvectors which are used for clustering in conjunction with the $k$-means algorithm, see Algorithm \ref{alg:normCluster}. The Laplacian matrix is often sparse and one can use power or Krylov subspace methods such as the Lanczos method to find the eigenvectors. There are several variants of Algorithm \ref{alg:normCluster}, such as that of \cite{shi2000normalized} which uses the so-called random walk Laplacian and clusters the smallest eigenvectors in a similar way. One of the motivations for these clustering methods is from the spectral relaxation of the minimisation of the normalised cut criterion. 

\begin{algorithm}
\caption{Spectral Clustering using the Normalised Laplacian \cite{ng2001spectral}}
\begin{algorithmic}[1]
\REQUIRE Graph $G$ with weight matrix $\Wm \in \mathbb{R}^{n \times n}$, number of clusters $k$ 
\STATE Find $k$ smallest eigenvectors $\Vm_k = [\vv_1, \ldots, \vv_k]$ of normalised Laplacian $\tilde{\Lm}$
\STATE Normalise the rows of $\Vm_k$, i.e. $\Vm_k \leftarrow \diag(\Vm_k\Vm_k^T)^{-\frac{1}{2}}\Vm_k$ 
\STATE Cluster rows of $\Vm_k$ with the $k$-means algorithm 
\RETURN Cluster membership vector $\cv \in \{1,\ldots,k\}^n$  
\end{algorithmic}\label{alg:normCluster}
\end{algorithm}

A naive approach to spectral clustering on a sequence of graphs has a large update cost due to the computation of the eigen-decomposition of $\tilde{\Lm}_t$ at each iteration $t$. A more efficient approach is to update relevant eigenvectors from one iteration to the next. In \cite{valgren2007incremental} the authors use the spectral clustering of \cite{ng2001spectral} however they do not update the eigenvectors incrementally but instead the clustering directly. In \cite{kong2011fast} the authors cluster using the unnormalised Laplacian matrix. Eigenvalues are updated and then clusters are modelled using a set of representative points (a similar strategy is used in \cite{yan2009fast} for example to cluster points in $\mathbb{R}^d$). The algorithm is potentially costly since it uses the full eigen-decomposition in conjunction with the eigen-gap heuristic to estimate the number of clusters. Furthermore, the solution is updated with the addition of vertices only, and not edges. The iterative clustering approach of \emph{Ning et al.} \cite{ning2010incremental,ning2007incremental} uses the spectral clustering method given in \cite{shi2000normalized} and updates eigenvectors incrementally. The algorithm incrementally updates the solution of the generalised eigenproblem $\Lm \vv = \lambda \Dm \vv$ by finding the derivatives on the eigenvalues/vectors with perturbations in all of the quantities involved. An iterative refinement algorithm is given for the eigenvalues and eigenvectors given a change in the edges or vertices of a graph. One then clusters the resulting $k$ smallest eigenvectors using $k$-means clustering. In order to limit errors which can build up cumulatively the authors recompute the eigenvectors after every $R$-th graph in the sequence. 

A disadvantage of the approach of \cite{ning2010incremental,ning2007incremental} lies in the fact that, to update an eigenvector, one must invert a small matrix for each weight change in the graph which makes updates costly. The size of this matrix is proportional to the number of neighbours of the vertices incident to the changed edge. To be more precise, the cost of updating an eigenvector in \cite{ning2010incremental} is $\mathcal{O}(\bar{N}^2 n + \bar{N}^3)$ in which $\bar{N}$ is the average size of the ``spatial neighbourhood'' of an edge, i.e. the average number of rows/columns incident to the edge. It follows that to update $k$ eigenvectors following $r$ edge changes, the complexity is $\mathcal{O}(rk(\bar{N}^2 n + \bar{N}^3))$. In contrast, the approach presented in our paper has a smaller update cost for a set of vertex or edge weight changes between $\Lm_t$ and $\Lm_{t+1}$, since changes are considered in a batch fashion, as will later become clear.  A further problem with the \textit{Ning et al.} approach is that eigenvectors are updated independently of one another and hence one loses the orthogonality $\vv_i^T\Dm\vv_j = \delta(i, j)$, where $\delta$ is the Kronecker delta function (taking the value $1$ if $i=j$ and $0$ otherwise), and vectors can become correlated for example.  

Another way of improving the efficiency of spectral clustering is to compute approximate eigen-decompositions at each iteration, for example by using the Nystr\"{o}m approach \cite{williams01nystrom}. The Nystr\"{o}m method is used for spectral graph clustering in \cite{fowlkes2004spectral} in conjunction with image segmentation. To estimate the eigenvalues and eigenvectors of $\Am \in \mathbb{R}^{n \times n}$, one first finds a matrix $\Am[I, I]$ in which $I \in \{1, \ldots n\}^m$ is a set of indices selected uniformly at random. If we assume that $\Am[I, I]$ is positive definite, we can take its square root $\Am[I, I]^{1/2}$. One then defines 
\begin{displaymath} 
\Sm = \Am[I, I] + \Am[I, I]^{-1/2}\Am[I, \bar{I}]\Am[I, \bar{I}]^T\Am[I, I]^{-1/2}, 
\end{displaymath}
where $\bar{I}$ is the complement of $I$, and diagonalises $\Sm$ using the eigen-decomposition $\Sm = \Um\Lambdam\Um^T$.  Let $\Vm = \Am[:, I] \Am[I, I]^{-1/2} \Um \Lambdam^{-1/2}$, then it can be shown that the Nystr\"{o}m approximation of $\Am$, given by $\tilde{\Am}_I = \Am[:, I]\Am[I, I]^{-1}\Am[I, :]$, is equivalent to $\Vm \Lambdam\Vm^T$.  If $\Am$ is indefinite a more complicated two-step procedure is required, see \cite{fowlkes2004spectral} for details. The resulting approximation is applied to find the first few eigenvectors of the normalised Laplacian matrix at a total cost of $\mathcal{O}(nm^2 + m^3)$. Several efficiency improvements based on this approach are proposed in \cite{li2011time} which finds the largest $k$ approximate eigenvectors of $\Dm^{-\frac{1}{2}}\Wm\Dm^{-\frac{1}{2}}$ at a reduced time and space complexity.  

The quality of the resulting approximation is determined by the extent the submatrix $\Am[\bar{I}, \bar{I}]$ is spanned by $\Am[I, :]$. One would naturally expect a good approximation for example when $\Am[I, :]$ spans the space of $\Am$. The choice of sampling of $I$ also affects approximation quality, and empirical and theoretical work in \cite{kumar2009sampling} suggests random sampling without replacement over the non-uniform sampling in \cite{drineas2005nystrom, drineas2006fast}. Hence for our later empirical work, we will use uniform random sampling without replacement. In addition to \cite{kumar2009sampling}, error bounds for the Nystr\"{o}m method are presented in \cite{drineas2005nystrom, mahdavi2012improved} in terms of the matrix approximation error using the Frobenius or spectral norm  (given by $\|\Am\|_F = \sqrt{\tr(\Am^T\Am)}$ and $\|\Am \|_2 = \sqrt{\lambda_{max}(\Am^\Tm\Am)}$ respectively). One of the disadvantages of applying the Nystr\"{o}m based approaches of \cite{fowlkes2004spectral, li2011time} on graphs is that by sampling only a subset of columns of the normalised Laplacian one can exclude important edges which help define clusters. These approaches are naturally more effective when a set of points $\{\xv_1, \ldots, \xv_n\} \in \mathbb{R}^d$ is mapped into a weighted graph using for example the Gaussian weighted distance $\Wm_{i, j} = \exp(-\|\xv_i - \xv_j\|^2)/2\sigma^2$, for $\sigma \in \mathbb{R}^+$, which is very informative about the relative positions of the points. 

Another class of algorithms which are potentially useful for spectral clustering is the randomised SVD \cite{halko2011finding}.  Notice that the SVD of a symmetric positive semi-definite matrix is identical to its eigendecomposition and hence SVD algorithms can be applied to spectral clustering. We recount an algorithm from \cite{halko2011finding} which is used in conjunction with kernel Principal Components Analysis (KPCA, \cite{sch98nonlinear}) in \cite{yun2011nystrom}. Algorithm \ref{alg:randomSVD} provides the associated pseudo-code. The purpose of the first three steps is to find an orthogonal matrix $\Qm$ such that the projection of $\Am$ onto $\Qm$ is a good approximation of $\Ym$, whose columns are random samples from the range of $\Am$, under a rank-$k$ projection. This projection is then used in conjunction with $\Qm$ so that one need only find the SVD of the smaller matrix $\Bm$. The complexity of this approach is $\mathcal{O}((qr+r^2)(m + n))$, which for a square matrix of size $n$ collapses simply to $\mathcal{O}((qr+r^2)n)$

\begin{algorithm}
\caption{Randomised SVD \cite{halko2011finding}}
\begin{algorithmic}[1]
\REQUIRE Matrix $\Am \in \mathbb{R}^{m \times n}$,  number of projection vectors $r$, exponent $q$ 
\STATE Generate a random Gaussian matrix $\Omegam \in \mathbb{R}^{n \times r}$
\STATE Create $\Ym = (\Am\Am^T)^q \Am \Omegam$ by alternative multiplication with $\Am$ and $\Am^T$ \label{step:power} 
\STATE Compute $\Ym = \Qm\Rm$ using the QR-decomposition 
\STATE Form $\Bm = \Qm^T\Am$ and compute SVD $\Bm = \hat{\Um}\Sigmam\Vm^T$
\STATE Set $\Um = \Qm\hat{\Um}$
\RETURN Approximate SVD $\Am \approx \Um\Sigmam\Vm^T$ 
\end{algorithmic}\label{alg:randomSVD}
\end{algorithm}


\section{SVD Updating} 
\label{sec:svd_update}
An important aspect of the incremental clustering method described later lies in its ability to efficiently compute the eigenvectors of a matrix from the eigenvectors of a submatrix, and for clarity's sake, we outline the SVD-updating algorithm of \cite{zha1999updating}. The SVD of $\Am \in \mathbb{R}^{m \times n}$ is the decomposition 
$$
\Am=\Pm \Sigmam \Qm^T,
$$
where $\Pm = [\pv_1, \ldots, \pv_m]$, $\Qm = [\qv_1, \ldots, \qv_n]$ and $\Sigmam = \diag(\sigma_1, \ldots, \sigma_r)$ are respectively the orthogonal matrices of left and right singular vectors and a diagonal matrix of singular values $\sigma_1 \geq \sigma_2 \geq, \ldots, \geq \sigma_r$, with $r = \min(m, n)$. 

\par In \cite{zha1999updating}, the authors use the SVD of $\Am$ to approximate the SVD of $[\Am \quad \Bm]$, with  $\Bm \in \mathbb{R}^{m \times p}$, without recomputing the SVD of the new matrix.  It is known that the best $k$-rank approximation of $\Am$, using the Frobenius norm error, is given by its SVD, namely 
$$
\Am_k = \Pm_k\Sigmam_k\Qm_k^T,
$$
where $\Pm_k$, $\Qm_k$, $\Sigmam_k$ correspond to the $k$ largest singular values. The general idea of the algorithm is to write
$$
[\Am_k \quad \Bm] = \Sm\Lambdam\Rm^T,
$$
in which $\Sm$ and $\Rm$ are  matrices with orthonormal columns. One then takes the rank-$k$ SVD $\Lambdam = \Gm_k \Gammam_k \Hm_k^T$ and $(\Sm\Gm_k) \Gammam_k (\Rm\Hm_k)^T$ is the rank-$k$ approximation for $[\Am \quad \Bm]$. The dimensionality of $\Lambdam$ is generally much smaller than that of $[\Am_k \quad \Bm]$ and hence the corresponding SVD approximation is computationally inexpensive. 

This approach is more accurate than those presented in \cite{berry1995using, o1994information}, however it comes at additional computational cost. Furthermore, the authors prove that when the matrix $[\Am \quad \Bm]^T[\Am \quad \Bm]$ has the form $\Xm + \alpha^2 \Imat$ in which $\Xm$ is symmetric positive semi-definite with rank-$k$ then the rank-$k$ approximation of $[\Am \quad \Bm]$ is identical to that of $[\Am_k \quad \Bm]$. 

\section{Incremental Eigen-approximation} \label{sec:our_work} 

In this section we address three types of updating problem upon the largest $k$ eigenvectors of a symmetric matrix. The updates are general operations, however they will be explained in the context of spectral clustering later in Section \ref{sec:incrementalCluster}. Assuming that $\Ym_1, \Ym_2 \in \mathbb{R}^{n \times p}$ and one does not have direct access to $\Am \in \mathbb{R}^{m \times n}$ and $\Bm \in \mathbb{R}^{m \times p}$ but only to the matrices $\Cm = \Am^T\Am$, $\Am^T\Bm$ and $\Bm^T\Bm$, the updates are: 

\begin{enumerate} 
  \item Addition of a low-rank symmetric matrix $\Cm \rightarrow \Cm + \Um$ where $\Um = \Ym_1\Ym_2^T + \Ym_2\Ym_1^T$
  \item Addition of rows and columns  $\Am^T\Am \rightarrow [\Am \quad \Bm]^T[\Am \quad \Bm]$
  \item Removal of rows and columns $[\Am \quad \Bm]^T[\Am \quad \Bm] \rightarrow \Am^T\Am$
\end{enumerate}

We have written the updates above in terms of a symmetric matrix $\Am^T\Am$ to improve notation. Note that any positive semi-definite symmetric matrix $\Mm$ can be decomposed into the form $\Mm = \Am^T\Am$, where $\Am$ has real entries, for example by using an eigen-decomposition or incomplete Cholesky factorisation. 

\subsection{Addition of a Low-rank Symmetric Matrix} \label{subsec:eigenAdd}  

The first type of eigen-approximation we are interested in is the addition of a low-rank symmetric matrix. One computes the eigen-decomposition of $\Cm$ and then approximates the rank-$k$ decomposition of $\Cm_k + \Um$ where $\Um = \Ym_1\Ym_2^T + \Ym_2\Ym_1^T$ and $\Cm_k$ is the approximation of $\Cm$ using the $k$ largest eigenvectors (also known as the best $k$-rank approximation of $\Cm$). A similar but not applicable update is considered for the SVD case in \cite{zha1999updating} in which the rank-$k$ approximation of $\Am_k + \hat{\Ym}_1\hat{\Ym}_2^T$ is found from $\Am_k$ in which $\hat{\Ym}_1 \in \mathbb{R}^{m \times j}$ and $\hat{\Ym}_2 \in \mathbb{R}^{n \times j}$ for some $j$. The general idea in our case is to find a matrix with orthonormal columns $\tilde{\Qm}$ such that $\Cm_k + \Um = \tilde{\Qm} \Deltam \tilde{\Qm}^T$ for a square matrix $\Deltam$. To this purpose, we first project the columns of $\Ym_1$ into the space orthogonal to the $k$ largest eigenvectors $\Qm_k$ of $\Cm$ (note the deviation from standard notation), a process known as \textit{deflation}. Assuming that eigenvectors have unit norm, the matrix $\Ym_1$ is thus deflated as follows: 

$$
\bar{\Ym}_1 = (\Imat - \Qm_k\Qm_k^T)\Ym_1,
$$ 
at a cost of $\mathcal{O}(npk)$. Note that $\bar{\Ym}_1\Thetam_1$ for some $\Thetam_1$, is orthogonal to $\Qm_k$ since $\Qm_k^T\bar{\Ym}_1\Thetam_1 = (\Qm_k^T\Ym_1 - \Qm_k^T\Ym_1)\Thetam_1 = \textbf{0}$. If we take the SVD $\bar{\Ym}_1 = \bar{\Pm}_1\bar{\Sigmam}_1 \bar{\Qm}^T_1$ then $\bar{\Pm}_1$ is orthogonal to $\Qm_k$ since $\bar{\Pm}_1 = \bar{\Ym}_1\bar{\Qm}_1\bar{\Sigmam}^{-T}_1$ assuming $\bar{\Sigmam}_1$ has nonzero diagonal entries. 

At the next stage we would like to orthogonalise the columns of $\Ym_2$ with respect to both $\Qm_k$ and $\bar{\Pm}_1$. Hence we deflate $\Ym_2$ in the following way:  

$$
\bar{\Ym}_2 = (\Imat - \bar{\Pm}_1\bar{\Pm}_1^T - \Qm_k\Qm_k^T)\Ym_2,
$$ 
at cost $\mathcal{O}(npk)$, where we have used the fact that $\bar{\Pm}_1$ is orthogonal to $\Qm_k$. Proved in a similar way to the step used earlier, the matrix  in the column space of $\bar{\Ym}_2$, $\bar{\Ym}_2\Thetam_2$ for some $\Thetam_2$, is orthogonal to $\Qm_k$ and $\bar{\Pm}_1$. Hence, we compute the SVD $\bar{\Ym}_2 = \bar{\Pm}_2\bar{\Sigmam}_2 \bar{\Qm}^T_2$ and note that the matrices $\bar{\Pm}_1$, $\bar{\Pm}_2$ and $\Qm_k$ are mutually orthogonal and span the space spanned by $\Cm_k + \Um$. This allows one to write 

\begin{displaymath} 
\Cm_k + \Um = \tilde{\Qm} \Deltam \tilde{\Qm}^T
\end{displaymath} 
as required with $\tilde{\Qm} = [\Qm_k \; \bar{\Pm}_1 \; \bar{\Pm}_2] \in \mathbb{R}^{n \times (k+2p)}$ and $\Deltam = \tilde{\Qm}^T (\Cm_k + \Um) \tilde{\Qm}$, or equivalently 

\begin{displaymath} 
\Deltam = 
\left[ \begin{array}{c c c} \Omegam_k + \Qm_k^T\Um\Qm_k & \Qm_k^T\Um\bar{\Pm}_1& \Qm_k^T\Ym_1\bar{\Qm}_2\bar{\Sigmam}_2 \\ 
\bar{\Pm}_1^T\Um\Qm_k & \bar{\Pm}_1^T\Um\bar{\Pm}_1 & \bar{\Sigmam}_1\bar{\Qm}_1^T\bar{\Qm}_2\bar{\Sigmam}_2 \\
\bar{\Sigmam_2}\bar{\Qm}_2^T\Ym_1^T\Qm_k &  \bar{\Sigmam}_2\bar{\Qm}_2^T\bar{\Qm}_1\bar{\Sigmam}_1 & \textbf{0} \end{array} \right], 
\end{displaymath}
in which $\Deltam \in \mathbb{R}^{(k+2p) \times (k+2p)}$. We take the rank-$k$ eigen-decomposition $\Deltam_k = \Hm_k\Pim_k\Hm_k^T$ and then the final eigen-approximation is given by $(\tilde{\Qm} \Hm_k) \Pim_k  (\tilde{\Qm} \Hm_k)^T$ in which it is easy to verify that the columns of $\tilde{\Qm} \Hm_k$ are orthonormal. 

The deflation and SVD of $\bar{\Ym}_1$ and $\bar{\Ym}_2$ cost $\mathcal{O}(npk)$ and  $\mathcal{O}(np^2)$ respectively and the eigen-decomposition of $\Deltam$ is $\mathcal{O}((k+2p)^3)$. In order to compute $\Deltam$ one can reuse the computations of $\Qm_k^T\Ym_1$, $\Qm_k^T\Ym_2$, $\bar{\Pm}_1^T\Ym_2$ which are used for deflations and also the matrices used for the SVD decompositions. Thus $\Deltam$ is found in $\mathcal{O}(p^3 + p^2k + pk^2)$, and the overall complexity of this algorithm is $\mathcal{O}((k^2+p^2)(p+k) + np(p + k))$. Of note here is that $n$ scales the complexity in a linear fashion, however costs are cubically related to $k$ and $p$. 

\subsection{Addition of Rows and Columns} \label{subsec:eigenConcat} 

In  correspondence with the SVD-updating method given above we consider the case in which one has the eigen-decomposition of $\Cm$ and then wants to find the rank-$k$ approximation of $\Em= [\Am \quad \Bm]^T[\Am \quad \Bm]$. Such a process is useful not just in incremental clustering but also in incrementally solving kernel Principal Components Analysis (KPCA,  \cite{sch98nonlinear}) for example. This update can be written in terms of that described above. This is seen by writing the former update  $\Am^T\Am \rightarrow [\Am \quad \Bm]^T[\Am \quad \Bm]$ in terms of the latter: 

\begin{displaymath} 
 \left[\begin{array}{c c}  
  \Am^T\Am  & \textbf{0} \\
  \textbf{0}  & \textbf{0} \\
 \end{array}\right] \rightarrow 
 \left[\begin{array}{c c}  
  \Am^T\Am  & \textbf{0} \\
  \textbf{0}  & \textbf{0} \\
 \end{array}\right] + 
 \left[\begin{array}{c c}  
  \textbf{0}  & \Am^T\Bm  \\
  \Bm^T\Am & \Bm^T\Bm  \\
 \end{array}\right].
\end{displaymath}

The second term on the right-hand side can be written as $\Ym_1\Ym_2^T + \Ym_2\Ym_1^T $ where $\Ym_1 = [\textbf{0} \quad \Imat_p]^T$ and  $\Ym_2 = [\Bm^T\Am \quad \frac{1}{2}\Bm^T\Bm]^T$. The eigenvectors of the first matrix on the right-hand side are found from those of $\Am^T\Am$ by simply adding $p$ zero rows to the existing eigenvectors, and the corresponding eigenvalues are identical. Additional eigenvectors are standard unit vectors spanning the $p$ new rows with corresponding eigenvalues as zero. A useful insight is that the deflated matrix $\bar{\Ym}_1 = \Ym_1$ and hence its SVD decomposition can be written directly as $\bar{\Ym}_1 = [\textbf{0} \quad \Imat_p]^T \Imat_p \Imat_p$.

\subsubsection{Alternative Approach} \label{subsec:alternative_removing}
Here we outline a simpler and more direct approach for the addition of rows and columns to a matrix. First let $\Cm = \Qm \Omegam \Qm^T$ in which $\Qm$ is a matrix of eigenvectors and $\Omegam$ is a diagonal matrix of eigenvalues.  Note that $\hat{\Em} = [\Am_k \quad \Bm]^T[\Am_k \quad \Bm]$ can be written as $\tilde{\Qm} \Deltam \tilde{\Qm}^T$ for a square matrix $\Deltam$. In our case we have 

\begin{eqnarray*} 
\tilde{\Qm} = \left[ \begin{array}{c c}  \Qm_k & \textbf{0} \\ \textbf{0} & \Imat_p
\end{array} \right] \mbox{ and } \Deltam = \left[\begin{array}{c c} \Omegam_k &
\Qm_k^T \Am_k^T\Bm \\ \Bm^T\Am_k\Qm_k & \Bm^T\Bm   \\  \end{array} \right],
\end{eqnarray*}
noting that $\Am_k\Qm_k\Qm_k^T = \Qm_k\Omegam_k\Qm^T_k\Qm_k\Qm_k^T = \Am_k$ since $\Qm^T_k\Qm_k = \Imat$. Furthermore, note that $\Qm_k^T\Am^T\Bm = \Qm_k^T\Qm\Sigmam\Pm^T\Bm = \Qm_k^T\Qm_k\Sigmam_k\Pm_k^T\Bm = \Qm_k^T\Am_k^T\Bm$ and calculating $\Qm_k^T\Am_k^T\Bm$ is $\mathcal{O}(npk)$. It follows that $\Deltam \in \mathbb{R}^{(k+p) \times (k+p)}$ can be found using $\Qm_k$, $\Am^T\Am$, and $\Am^T\Bm$ and $\Bm^T\Bm$. In the final step, and analogous to the SVD case we take the rank-$k$ eigen-decomposition  $\Deltam = \Hm_k\Pim_k\Hm_k^T$ at a cost of $\mathcal{O}((k+p)^3)$ and then the rank-$k$ eigen approximation of $\hat{\Em}$ is given by $(\tilde{\Qm} \Hm_k) \Pim_k (\tilde{\Qm} \Hm_k)^T$. 

Notice that this eigen-update almost follows directly from the SVD update in \cite{zha1999updating} however one need not use the QR decomposition of the deflated $\Bm$. Furthermore, the eigen-approximation above is identical to that at the start of Section \ref{subsec:eigenConcat} with $\Ym_1 = [\textbf{0} \quad \Imat_p]^T$ and  $\Ym_2 = [\Bm^T\Am_k \quad \frac{1}{2}\Bm^T\Bm]^T$. The difference between the method above and that of Section \ref{subsec:eigenConcat} is the former uses $\Am_k$ as opposed to $\Am$ in $\Ym_2$ which results in a greater error. 

\subsection{Removing Rows and Columns} 

Observe that removing rows and columns is equivalent to zeroing the corresponding rows/columns:  

\begin{displaymath} 
 \hat{\Cm} = \left[\begin{array}{c c}  
  \Am^T\Am   & \Am^T\Bm  \\
  \Bm^T\Am & \Bm^T\Bm  \\
 \end{array}\right]
\rightarrow 
 \left[\begin{array}{c c}  
  \Am^T\Am   & \Am^T\Bm  \\
  \Bm^T\Am & \Bm^T\Bm  \\
 \end{array}\right] - 
 \left[\begin{array}{c c}  
  \textbf{0}  & \Am^T\Bm  \\
  \Bm^T\Am & \Bm^T\Bm  \\
 \end{array}\right],
\end{displaymath}
and in this form one can see the connection to Section \ref{subsec:eigenConcat}. Again, one can write the second term on the right-hand side as $\Ym_1\Ym_2^T + \Ym_2\Ym_1^T $ where $\Ym_1 = [\textbf{0} \quad \Imat_p]^T$ and  $\Ym_2 = -[\hat{\Bm}^T\hat{\Am} \quad \frac{1}{2}\hat{\Bm}^T\hat{\Bm}]^T$ where $\hat{\Bm}^T\hat{\Am}$ and $\hat{\Bm}^T\hat{\Bm}$ are found using the rank-$k$ approximation of $\hat{\Cm}$. Since we are updating the rank-$k$ approximation of $\hat{\Cm}$, the final eigen-approximation will have zero elements in the eigenvectors at rows corresponding to those row/columns that are deleted.

\subsection{Discussion} \label{subsec:update_interpretation}

A similar eigen-update to that of Section \ref{subsec:alternative_removing} is considered in \cite{kwok03incrementalEig} and used in conjunction with PCA \cite{hotelling33pca} and KPCA in \cite{zhao2006novel}. In PCA, one starts with a set of examples $T = \{\xv_1, \xv_2, \ldots, \xv_n\}$ where $\xv_i \in \mathbb{R}^d$. These examples are centred, and then one finds the $k$ largest eigenvectors $\uv_1, \ldots, \uv_k$ of the covariance matrix $\Cm = 1/n \sum_{i=1}^n \tilde{\xv}_i\tilde{\xv}_i^T$, where $\tilde{\xv}$ is a centred example. KPCA functions similarly, except that one finds the largest eigenvectors of a kernel matrix $\Km \in \mathbb{R}^{n \times n}$ which is computed using the centred examples. In the incremental setting one approximates the eigenvectors of the covariance or kernel matrix on the addition of a new set of examples. In \cite{zhao2006novel} the authors phrase the problem as a series of SVD updates, and we use a more direct approach in Section \ref{subsec:alternative_removing}.  Furthermore, whereas \cite{zhao2006novel} examines the addition of rows and columns to a positive semi-definite matrix, our work is more general in that we additionally consider removal of rows and columns and addition of a low-rank symmetric matrix. One key novelty of this paper is the update of Section \ref{subsec:eigenAdd} which encapsulates all three of these updates and yet does not simply follow from the SVD updating work of \cite{zha1999updating}. 

To formalise this notion, the updates outlined above can be written in terms of the addition of a low-rank symmetric matrix $\Um$ to a positive semi-definite matrix $\Cm$. Our approximation method computes $\Cm_k + \Um$ via its expression as $\tilde{\Qm} \Hm_k\Pim_k\Hm_k^T \tilde{\Qm}^T$ where $\tilde{\Qm}$ is a matrix with orthogonal columns spanning $\Cm_k + \Um$ and  $\Hm_k$  and $\Pim_k$ represent the largest $k$ eigenvector and eigenvalues of a matrix $\Deltam=\tilde{\Qm}^T \left(\Cm_k + \Um\right) \tilde{\Qm}$. The following lemma shows the consequence of this approach. 

\begin{lemma} 
Decompose a matrix $\Zm = \Fm\Lambdam\Fm^T$ where $\Fm$ is \emph{any}  matrix with orthonormal columns $\Fm^T\Fm = \Imat$. For some $k$ find the best rank-$k$ eigen-approximation $\Lambdam_k = \Hm_k\Pim_k\Hm_k^T$ in which $\Hm_k$ and $\Pim_k$ are the largest $k$ eigenvectors and eigenvalues of $\Lambdam$, and let $\hat{\Zm} = \Fm\Hm_k\Pim_k\Hm_k^T\Fm^T$. Then the best rank-$k$ eigen-decomposition of $\Zm$ is given by:
$$
\Zm_k = \Um_k\Sm_k\Um_k^T = \hat{\Zm}.$$ 
\label{lem:update}\end{lemma}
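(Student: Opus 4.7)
The plan is to reduce the statement to a routine application of the Eckart--Young--Mirsky theorem by producing an explicit orthonormal eigen-decomposition of $\Zm$ whose spectrum coincides (on its nonzero part) with that of $\Lambdam$. First I would take a full symmetric eigen-decomposition $\Lambdam = \Hm \Pim \Hm^T$ in which $\Hm$ is a square orthogonal matrix and $\Pim$ is the diagonal matrix of all eigenvalues of $\Lambdam$, ordered so that $\Hm_k$ and $\Pim_k$ appear as the leading block. Substituting this into the hypothesis gives
$$
\Zm \;=\; \Fm \Hm \Pim \Hm^T \Fm^T \;=\; (\Fm \Hm)\,\Pim\,(\Fm \Hm)^T.
$$

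Next I would verify that $\Fm \Hm$ has orthonormal columns: $(\Fm \Hm)^T (\Fm \Hm) = \Hm^T \Fm^T \Fm \Hm = \Hm^T \Hm = \Imat$, using $\Fm^T\Fm = \Imat$ and the orthogonality of $\Hm$. Consequently $(\Fm \Hm)\,\Pim\,(\Fm \Hm)^T$ is an honest partial spectral decomposition of $\Zm$: the columns of $\Fm \Hm$ are orthonormal eigenvectors with eigenvalues given by the diagonal of $\Pim$, and the remaining eigenvalues of $\Zm$ (those associated with the orthogonal complement of the column space of $\Fm$) are all zero. In particular, the multiset of nonzero eigenvalues of $\Zm$ is contained in that of $\Lambdam$, and every eigenvector of $\Zm$ corresponding to a nonzero eigenvalue is of the form $\Fm \hv$ where $\hv$ is a corresponding eigenvector of $\Lambdam$.

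With this established, the conclusion follows from the Eckart--Young--Mirsky theorem applied to the symmetric matrix $\Zm$: the best rank-$k$ approximation in Frobenius (or spectral) norm is obtained by keeping the $k$ eigenpairs of largest magnitude. Since the nonzero eigenpairs of $\Zm$ are exactly $\{(\Fm \hv_i, \pi_i)\}$ with $(\hv_i, \pi_i)$ the eigenpairs of $\Lambdam$, selecting the largest $k$ of them corresponds precisely to the block $(\Fm \Hm_k, \Pim_k)$, and therefore
$$
\Zm_k \;=\; \Um_k \Sm_k \Um_k^T \;=\; (\Fm \Hm_k)\,\Pim_k\,(\Fm \Hm_k)^T \;=\; \Fm \Hm_k \Pim_k \Hm_k^T \Fm^T \;=\; \hat{\Zm}.
$$

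The only mildly delicate point, and the step I would be most careful about, is the bookkeeping when $\Fm$ is not square: one must acknowledge the extra zero eigenvalues contributed by $\ker(\Fm^T)$ and argue that they do not enter the top-$k$ selection (either because $k$ does not exceed the rank of $\Lambdam$, or because ties with zero can be broken in favour of keeping the top-$k$ of $\Pim$ without changing the resulting matrix $\hat{\Zm}$). Once this is handled, the identification $\Zm_k = \hat{\Zm}$ is immediate, and the lemma is proved.
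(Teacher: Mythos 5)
Your proof is correct and takes essentially the same route as the paper's: both rest on the observation that $\Fm\Hm$ has orthonormal columns, so the nonzero eigenpairs of $\Zm$ are exactly $(\Fm\hv_i,\pi_i)$ where $(\hv_i,\pi_i)$ are those of $\Lambdam$, and selecting the top-$k$ of them yields $\hat{\Zm}$. Your version is a bit more careful than the paper's about the bookkeeping for the extra zero eigenvalues coming from $\ker(\Fm^T)$ when $\Fm$ is rectangular (the paper's closing remark that ``the eigenvectors of $\Zm$ are in the column space of $\Fm$'' is literally true only for eigenvectors with nonzero eigenvalue), but this is a refinement of the same argument rather than a different one.
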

\begin{proof} 
Note that $\Lambdam = \Fm^T\Zm\Fm$ due to the orthogonality of $\Fm$. Let $\uv$, $s$ be the eigenvectors and eigenvalues of $\Zm$, and define $\uv = \Fm\vv$ for some $\vv$ then  $\Fm^T\Zm\Fm\vv = s\vv$. This implies that the eigenvalues of $\Lambdam$ are the same as those of $\Zm$ and the eigenvectors are related by $\Um = \Fm\Vm$. Hence we have $\Vm = \Hm$ and $\Sm = \Pim$ which implies $\hat{\Zm} = \Fm\Hm_k\Pim_k\Hm_k^T\Fm^T = \Um_k \Sm_k \Um_k^T = \Zm_k$ as required. The only condition on $\Fm$ is that the eigenvectors of $\Zm$ are in the column space of $\Fm$. This must be the case however since $\Zm = \Um\Sm\Um^T = \Fm\Vm\Sm\Vm^T\Fm^T$. 
\end{proof}

Hence, the update  $\Cm_k + \Um$ of Section \ref{subsec:eigenAdd} is identical to the best rank-$k$ approximation of $\Cm_k + \Um$.

\section{Eigen-approximation Quality}\label{sec:quality}  
This section aims to bound the quality of the proposed eigen-approximation approach. As mentioned in Section \ref{subsec:update_interpretation}, the proposed approach replaces the expected best rank-$k$ approximation of a matrix $\Am + \Bm$ by the best rank-$k$ approximation of $\Am_k + \Bm$, where $\Am_k$ denotes the best rank-$k$ approximation of $\Am$. Hence, our objective is to control the difference between $(\Am + \Bm)_k$ and $(\Am_k + \Bm)_k$.

 The result in Section \ref{subsec:update_interpretation} gives us a first insight into the approximation error of the updates described: the residual matrix is that corresponding to the eigenvectors and eigenvalues after $k$. One can see that for any matrix $\Cm$

$$\| \Cm - \Cm_k \|_F^2 = \|\Cm_{k^\bot} \|_F^2 = \sum_{i=k+1}^n \omega_i^2,$$ 
where $\omega_i$ is the $i$th eigenvalue of $\Cm$ (unless otherwise stated eigenvalues are always given in descending order) and $\Cm_{k^\bot}$ is the approximation of $\Cm$ using eigenvectors/eigenvalues after $k$. This implies that $\Cm$ is well approximated by the largest $k$ eigenvectors if the sum of the square of the remaining eigenvalues is small. 

This certainly gives us insight into when our eigen-updating approach will be accurate, however the kind of matrices we will work with do not have this property in general. We now turn to matrix perturbation theory \cite{stewart1990matrix} in order to learn more about the approximated eigenvectors of the updated matrix. In a nutshell, it lies in controlling the \emph{angle} between two \emph{invariant subspaces}. Before giving the corresponding theorem, the following subsection introduces the necessary notions of invariant subspaces of a matrix and of the canonical angles between subspaces.

\subsection{Invariant Subspaces and Canonical Angles}

We begin by introducing the simple concept of an invariant subspace. 

\begin{definition} 
 The subspace $\mathcal{X}$ is an \emph{invariant subspace} of $\Am$ if $\Am \mathcal{X} \subset \mathcal{X}$. 
\end{definition}

It can also be shown that if the columns of $\Xm$ form a basis for $\mathcal{X}$ of $\Am$ then there is a unique matrix $\Lm$ such that $\Am\Xm = \Xm\Lm$.  The matrix $\Lm$ is a representation of $\Am$ with respect to the basis $\Xm$, and it has identical eigenvalues to $\Am$. A useful decomposition in perturbation theory is to reduce $\Am$ to a block diagonal form. Let $\Xm_1$ be an orthogonal matrix which spans the invariant subspace of $\Am$, $\mathcal{X}_1$, and assume that we have a matrix $\Ym_2$ such that $[\Xm_1 \Ym_2]$ is unitary and $\Ym_2$ spans the space orthogonal to $\mathcal{X}_1$, then this allows us to write 

\begin{equation}
[\Xm_1 \Ym_2]^T \Am [\Xm_1 \Ym_2] =\left[ \begin{array}{c c} \Lm_1 & \Hm \\ \mathbf{0} & \Lm_2 \end{array} \right],
\label{eqn:blockInvSub}
\end{equation}
in which $\Lm_1 = \Xm_1^T\Am\Xm_1$, $\Lm_2 = \Ym_2^T\Am\Ym_2$ and $\Hm = \Xm_1^T\Am\Ym_2$. The above equation is known as the \emph{reduced form} of $\Am$ with respect to $[\Xm_1 \Ym_2]$. The proof of why the bottom left block of this matrix is zero is straightforward, see \cite{stewart1990matrix} for details. This gives us the knowledge to define a \emph{simple invariant subspace}.

\begin{definition} 
Let $\mathcal{X}$ be an invariant subspace of $\Am$ and consider the reduced form of Equation \eqref{eqn:blockInvSub}, then $\mathcal{X}$ is a \emph{simple invariant subspace} of $\Am$ if there are no common eigenvalues between $\Lm_1$ and $\Lm_2$. 
\end{definition}

Notice that a simple invariant subspace has a complementary space, defined as follows.

\begin{definition} 
 Let the simple invariant subspace $\mathcal{X}_1$ have the reduced form of Equation \eqref{eqn:blockInvSub} with respect to the orthogonal matrix $[\Xm_1 \Ym_2]$. Then there exist $\Xm_2$ and $\Ym_1$ such that $[\Xm_1 \Xm_2]^{-1} = [\Ym_1 \Ym_2]^T$ and 
\begin{equation}
 \Am = \Xm_1\Lm_1\Ym_1^T + \Xm_2\Lm_2\Ym_2^T,
\end{equation}
where $\Lm_i = \Ym_i^T\Am\Xm_i$, $i=1,2$. This form of $\Am$ is known as the \emph{spectral resolution} of $\Am$ along $\mathcal{X}_1$ and $\mathcal{X}_2$. 
\end{definition}

This allows us to introduce a theorem essential to our main result. However, first we must define the notion of angle between two subspaces. 

\begin{theorem} 
Let $\Xm_1, \Ym_1 \in \mathbb{R}^{n \times \ell}$ with $\Xm_1^T\Xm_1 = \Imat$ and $\Ym_1^T\Ym_1 = \Imat$. If $2\ell \leq n$, there are unitary matrices $\Qm, \Um_{11}, \Vm_{11}$ such that 
\begin{displaymath} 
\Qm\Xm_1\Um_{11} = \left( \begin{array}{c} \Imat_\ell \\ \textbf{0} \\ \textbf{0}  \end{array} \right) 
\text{ and } 
\Qm\Ym_1\Vm_{11} = \left( \begin{array}{c} \Gammam \\  \Sigmam \\ \textbf{0} \end{array} \right), 
\end{displaymath}
in which $\Gammam  = \diag(\gamma_1, \ldots, \gamma_\ell)$ with $0 \leq \gamma_1 \leq \ldots \leq \gamma_\ell$, $\Sigmam = \diag(\sigma_1, \ldots, \sigma_\ell)$ with  $\sigma_1 \geq \cdots \geq \sigma_\ell \geq 0$, and $\gamma_i^2 + \sigma_i^2 = 1$, $i=1,\ldots,\ell$.  If $2\ell > n$ then $\Qm, \Um_{11}, \Vm_{11}$ can be chosen so that  
\begin{displaymath} 
\Qm\Xm_1\Um_{11} = \left( \begin{array}{c c} \Imat_{n-\ell} & \textbf{0} \\ \textbf{0} &  \Imat_{2\ell - n} \\ \textbf{0} & \textbf{0}  \end{array} \right)
\text{ and } 
\Qm\Ym_1\Vm_{11} = \left( \begin{array}{c c} \Gammam & \textbf{0}  \\  \textbf{0} & \Imat_{2\ell - n} \\ \Sigmam  & \textbf{0} \end{array} \right), 
\end{displaymath}
in which $\Gammam  = \diag(\gamma_1, \ldots, \gamma_{n-\ell})$ with $0 \leq \gamma_1 \leq \ldots \leq \gamma_{n-\ell}$, $\Sigmam = \diag(\sigma_1, \ldots, \sigma_{n-\ell})$ with  $\sigma_1 \geq \cdots \geq \sigma_{n-\ell} \geq 0$, and $\gamma_i^2 + \sigma_i^2 = 1$, $i=1,\ldots,{n-\ell}$. \label{thm:ssBases}
\end{theorem}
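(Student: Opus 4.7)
The statement is the CS (cosine–sine) decomposition for two orthonormal bases, and I would prove it by first reducing $\Xm_1$ to a canonical form and then analyzing what the same transformation does to $\Ym_1$ via the SVD.

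The plan is as follows. First, complete $\Xm_1$ to an orthogonal matrix by choosing $\Xm_2 \in \mathbb{R}^{n \times (n-\ell)}$ with orthonormal columns spanning the orthogonal complement of the range of $\Xm_1$, and set $\tilde{\Qm} = [\Xm_1 \;\; \Xm_2]^T$. Then $\tilde{\Qm}\Xm_1 = \bigl(\begin{smallmatrix}\Imat_\ell \\ \textbf{0}\end{smallmatrix}\bigr)$, and writing $\tilde{\Qm}\Ym_1 = \bigl(\begin{smallmatrix}\Cm \\ \Sm \end{smallmatrix}\bigr)$ with $\Cm = \Xm_1^T\Ym_1 \in \mathbb{R}^{\ell \times \ell}$ and $\Sm = \Xm_2^T\Ym_1 \in \mathbb{R}^{(n-\ell)\times \ell}$, the relation $\Ym_1^T\Ym_1 = \Imat_\ell$ immediately yields the key identity $\Cm^T\Cm + \Sm^T\Sm = \Imat_\ell$.

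Next, take the SVD $\Cm = \Um_{11}\Gammam\Vm_{11}^T$ with $\Gammam = \diag(\gamma_1,\ldots,\gamma_\ell)$ in increasing order. Since $\Cm^T\Cm = \Imat - \Sm^T\Sm \preceq \Imat$, all $\gamma_i \in [0,1]$. Substituting into the identity above gives $(\Sm\Vm_{11})^T(\Sm\Vm_{11}) = \Vm_{11}^T(\Imat - \Cm^T\Cm)\Vm_{11} = \Imat - \Gammam^2$, so defining $\Sigmam = \diag(\sigma_1,\ldots,\sigma_\ell)$ with $\sigma_i = \sqrt{1-\gamma_i^2}$ gives decreasing $\sigma_i$ and the correct relation $\gamma_i^2 + \sigma_i^2 = 1$. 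Hence the columns of $\Sm\Vm_{11}$ are orthogonal with squared norms $\sigma_i^2$, so there exists a matrix $\Wm \in \mathbb{R}^{(n-\ell)\times \ell}$ with orthonormal columns such that $\Sm\Vm_{11} = \Wm\Sigmam$; where $\sigma_i > 0$ take $\Wm$'s $i$th column to be $\Sm\Vm_{11}\ev_i/\sigma_i$, and fill in the remaining columns by any orthonormal extension.

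In the case $2\ell \leq n$, we have $n-\ell \geq \ell$, so $\Wm$ can be completed to a full orthogonal matrix $[\Wm \;\; \Wm'] \in \mathbb{R}^{(n-\ell)\times(n-\ell)}$. Taking $\Qm = \mathrm{blockdiag}(\Um_{11}^T,\, [\Wm \;\; \Wm']^T)\,\tilde{\Qm}$, a direct computation yields $\Qm\Xm_1\Um_{11} = \bigl(\begin{smallmatrix}\Imat_\ell \\ \textbf{0}\\ \textbf{0}\end{smallmatrix}\bigr)$ and $\Qm\Ym_1\Vm_{11} = \bigl(\begin{smallmatrix}\Gammam \\ \Sigmam \\ \textbf{0}\end{smallmatrix}\bigr)$ as desired. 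The main obstacle is the case $2\ell > n$, where $\Sm$ has fewer rows than columns, so $\Imat-\Gammam^2$ cannot be full rank; this forces at least $2\ell-n$ singular values $\gamma_i$ to equal $1$ (with corresponding $\sigma_i = 0$). Geometrically these unit cosines correspond to the intersection of the column spaces of $\Xm_1$ and $\Ym_1$. I would handle this by ordering the SVD so that the $2\ell-n$ trivial directions appear in a dedicated block, then repeat the construction on the remaining $n-\ell$ nontrivial directions (where $\Wm$ fits exactly into $\mathbb{R}^{(n-\ell)\times(n-\ell)}$ with no room for a $\Wm'$ block), and assemble the resulting $\Qm, \Um_{11}, \Vm_{11}$ to produce the stated $3\times 2$ block forms. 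The accounting of which block absorbs the shared eigendirections is the only subtle point; everything else is bookkeeping from the SVD of $\Cm$.
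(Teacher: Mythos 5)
The paper does not actually prove Theorem~\ref{thm:ssBases}; it quotes it as a standard result from Stewart and Sun's matrix perturbation theory book, so there is no in-paper argument to compare against. Your proof, however, is a correct rendering of the classical SVD-based derivation of the CS decomposition, and the key steps all check out: completing $\Xm_1$ to an orthogonal $[\Xm_1\;\Xm_2]$ and applying its transpose reduces $\Xm_1$ to $\bigl(\begin{smallmatrix}\Imat_\ell\\ \mathbf{0}\end{smallmatrix}\bigr)$; the identity $\Cm^T\Cm + \Sm^T\Sm = \Imat_\ell$ follows from $\Ym_1^T\Ym_1 = \Imat$; taking the SVD of $\Cm$ with increasing singular values $\gamma_i \in [0,1]$ and setting $\sigma_i = \sqrt{1-\gamma_i^2}$ gives $(\Sm\Vm_{11})^T(\Sm\Vm_{11}) = \Sigmam^2$; and the block-diagonal construction $\Qm = \mathrm{blockdiag}(\Um_{11}^T,[\Wm\;\Wm']^T)[\Xm_1\;\Xm_2]^T$ produces exactly the stated forms when $2\ell \le n$.

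Your treatment of the case $2\ell > n$ is only sketched, but the sketch is sound and fills in correctly. Since $\Sm \in \mathbb{R}^{(n-\ell)\times\ell}$ has rank at most $n-\ell < \ell$, the matrix $\Imat - \Gammam^2 = \Sm^T\Sm$ has rank at most $n-\ell$, so the largest $2\ell - n$ of the $\gamma_i$ equal $1$ and the corresponding $\sigma_i$ vanish. Writing $\Gammam = \mathrm{diag}(\tilde\Gammam, \Imat_{2\ell-n})$ and $\Sigmam = \mathrm{diag}(\tilde\Sigmam, \mathbf{0})$ with $\tilde\Gammam, \tilde\Sigmam$ of order $n-\ell$, one finds $\Sm\Vm_{11} = [\tilde\Wm\tilde\Sigmam\;\;\mathbf{0}]$ for a square orthogonal $\tilde\Wm \in \mathbb{R}^{(n-\ell)\times(n-\ell)}$, and the same $\Qm$ construction (with $\tilde\Wm^T$ in place of $[\Wm\;\Wm']^T$, no extension needed) yields the stated $3\times 2$ block forms. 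The one thing to be careful of is that the orthonormal extension of $\Wm$ (in the $2\ell\le n$ case) must be chosen orthogonal to the already-defined nonzero columns, which is possible precisely because $n - \ell \ge \ell$; you implicitly use this but it is worth making explicit since it is exactly the point that distinguishes the two cases.
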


Geometrically, let $\mathcal{X}_1$ and $\mathcal{Y}_1$ be subspaces  of dimension $\ell$ and $\mathcal{Q}$ be a unitary transformation. Then the matrices $\Qm\Xm_1\Um_{11}$ and $\Qm\Ym_1\Vm_{11}$, with $\Xm_1 \in \mathcal{X}_1$, $\Xm_2 \in \mathcal{X}_2$, $\Qm \in \mathcal{Q}$ ,  form bases of $\mathcal{QX}_1$ and $\mathcal{QY}_1$ and $\sigma_i$ and $\gamma_i$ can be regarded as sines and cosines of the angles between the bases. We now define a measure of similarity between subspaces. 

\begin{definition} 
 Let $\mathcal{X}$ and $\mathcal{Y}$ be subspaces of the same dimension, then the \emph{canonical angles} between the subspaces are the diagonal entries of the matrix $\sin^{-1} \Sigmam=\diag(\sin^{-1}(\sigma_1), \ldots, \sin^{-1}(\sigma_{n-\ell}))$ where $\Sigmam$ is the matrix defined in Theorem \ref{thm:ssBases}.
\end{definition}

It follows that $\Sigmam$ is a measure of how two subspaces differ.

\subsection{Angle Between Exact and Updated rank-$k$ Approximations}
We are now ready to state the main theorem required for our result. 

\begin{theorem}\label{thm:canAngle} 
Let $\Am$ be a Hermitian matrix with spectral resolution given by $[\Xm_1 \Xm_2]^T \Am [\Xm_1 \Xm_2] = \diag(\Lm_1, \Lm_2)$ where $[\Xm_1 \Xm_2]$ is unitary. Let $\Zm \in \mathbb{R}^{n \times k}$ have orthonormal columns, $\Mm$ be any Hermitian matrix of order $k$ and define the residual matrix as $\Rm = \Am\Zm - \Zm\Mm$. Let $\lambda(\Am)$ represent the set of eigenvalues of $\Am$ and suppose that $\lambda(\Mm) \subset [\alpha, \beta]$ and for some $\delta > 0$, $\lambda(\Lm_2) \subset \mathbb{R} \setminus [\alpha - \delta, \beta + \delta]$. Then, for any unitary invariant norm, we have: 
\begin{equation}
 \|\sin \Theta(\mathcal{R}(\Xm_1), \mathcal{R}(\Zm))  \| \leq \frac{\|\Rm\|}{\delta},
\end{equation}
where $\mathcal{R}(\cdot)$ is the column space of a matrix. 
\end{theorem}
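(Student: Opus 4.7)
The statement is the classical Davis--Kahan $\sin\Theta$ theorem (in its generalized residual form), and my plan follows the standard Sylvester-equation route.

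The plan is to reduce the problem to bounding $\|\Xm_2^T\Zm\|$, since the canonical angles $\Theta(\mathcal{R}(\Xm_1),\mathcal{R}(\Zm))$ are encoded in precisely this projection. Concretely, because $[\Xm_1\ \Xm_2]$ is unitary, Theorem \ref{thm:ssBases} applied with the bases $\Xm_1$ and $\Zm$ yields that the singular values of $\Xm_2^T\Zm$ are exactly the sines of the canonical angles between $\mathcal{R}(\Xm_1)$ and $\mathcal{R}(\Zm)$. Thus for any unitarily invariant norm one has $\|\sin\Theta(\mathcal{R}(\Xm_1),\mathcal{R}(\Zm))\| = \|\Xm_2^T\Zm\|$, and the theorem reduces to proving $\|\Xm_2^T\Zm\| \le \|\Rm\|/\delta$.

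Next I would project the residual equation $\Rm = \Am\Zm - \Zm\Mm$ onto the complementary invariant subspace by multiplying on the left by $\Xm_2^T$. Using $\Xm_2^T\Am = \Lm_2\Xm_2^T$ (which follows from the block-diagonal spectral resolution of $\Am$), this gives
\begin{equation*}
\Lm_2\, \Xm_2^T\Zm \;-\; \Xm_2^T\Zm\, \Mm \;=\; \Xm_2^T\Rm.
\end{equation*}
This is a Sylvester equation of the form $\Lm_2 \Pm - \Pm \Mm = \Km$ in the unknown $\Pm = \Xm_2^T\Zm$.

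The key ingredient will be the separation-of-spectra lemma for Sylvester equations: if $\lambda(\Lm_2)$ and $\lambda(\Mm)$ are disjoint with gap $\delta$ in the sense assumed here (namely $\lambda(\Mm)\subset[\alpha,\beta]$ and $\lambda(\Lm_2)$ avoids $[\alpha-\delta,\beta+\delta]$), then the Sylvester operator $\Pm \mapsto \Lm_2\Pm - \Pm\Mm$ is invertible and its inverse satisfies $\|\Pm\| \le \|\Km\|/\delta$ for every unitarily invariant norm. Granted this, the bound $\|\Xm_2^T\Zm\| \le \|\Xm_2^T\Rm\|/\delta \le \|\Rm\|/\delta$ follows immediately, where the last inequality uses that multiplication by the submatrix $\Xm_2^T$ of a unitary matrix is non-expansive in any unitarily invariant norm.

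The main obstacle is precisely the separation lemma: proving $\|\Pm\|\le\|\Km\|/\delta$. The cleanest route is to diagonalize both Hermitian matrices, write $\Lm_2 = \Um_2 D_2 \Um_2^*$ and $\Mm = \Vm D_M \Vm^*$, transform the Sylvester equation into $D_2 \tilde\Pm - \tilde\Pm D_M = \tilde\Km$ with $\tilde\Pm = \Um_2^*\Pm\Vm$ and $\tilde\Km = \Um_2^*\Km\Vm$, and observe entrywise that $\tilde\Pm_{ij} = \tilde\Km_{ij}/(d_2^{(i)} - d_M^{(j)})$. The spectral gap hypothesis guarantees $|d_2^{(i)} - d_M^{(j)}| \ge \delta$ for all $i,j$, so $\tilde\Pm$ is obtained from $\tilde\Km$ by a Hadamard product with a matrix of entries bounded by $1/\delta$; invoking the fact that such Hadamard multipliers are contractions on any unitarily invariant norm (a standard result in perturbation theory, e.g.\ Stewart--Sun) then yields $\|\tilde\Pm\|\le\|\tilde\Km\|/\delta$, and unitary invariance transports the bound back to $\Pm$ and $\Km$. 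Combining all three steps closes the argument.
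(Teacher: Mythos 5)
The paper does not actually supply a proof of Theorem~\ref{thm:canAngle}; it is quoted as a known result from matrix perturbation theory (it is the generalized Davis--Kahan $\sin\Theta$ theorem, found in Stewart and Sun \cite{stewart1990matrix}, which the paper cites just before this section). So there is no paper proof to compare against, and your contribution is to fill in the argument the paper left implicit.

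Your proposed proof follows the standard route and the structure is correct: the reduction $\|\sin\Theta(\mathcal{R}(\Xm_1),\mathcal{R}(\Zm))\| = \|\Xm_2^T\Zm\|$ via Theorem~\ref{thm:ssBases}; the observation $\Xm_2^T\Am = \Lm_2\Xm_2^T$ from the block-diagonal spectral resolution; multiplying the residual identity on the left by $\Xm_2^T$ to obtain the Sylvester equation $\Lm_2(\Xm_2^T\Zm) - (\Xm_2^T\Zm)\Mm = \Xm_2^T\Rm$; and the non-expansiveness of left-multiplication by a submatrix of a unitary. One step, however, is stated too loosely. You assert that because the Hadamard multiplier $\bigl[1/(d_2^{(i)} - d_M^{(j)})\bigr]$ has entries bounded by $1/\delta$, it is a contraction with constant $1/\delta$ on every unitarily invariant norm. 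That implication is false in general: the Schur multiplier norm of a matrix is not controlled by its maximum absolute entry (it equals the infimum of $\|a\|_\infty\|b\|_\infty$ over factorizations $M_{ij}=\langle a_i,b_j\rangle$, which can be much larger). What is true, and what the theorem actually requires, is that this particular Cauchy-type matrix has Schur multiplier norm at most $1/\delta$ \emph{because} the separation has interval structure, i.e.\ $\lambda(\Mm)\subset[\alpha,\beta]$ while $\lambda(\Lm_2)$ avoids $[\alpha-\delta,\beta+\delta]$. The usual proof splits the spectral projection of $\Lm_2$ into the part below $\alpha-\delta$ and the part above $\beta+\delta$, represents $1/(x-y)=\int_0^\infty e^{-t(x-y)}\,dt$ on each side, and integrates the resulting rank-one Schur multipliers; this yields the $1/\delta$ bound for every unitarily invariant norm. (Without the interval hypothesis, the best universal constant degrades to $\pi/2$.) So the lemma you invoke from Stewart--Sun is the right one, but it hinges on the interval separation, not merely on the entrywise bound; you should state the lemma in that sharper form.
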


Before we introduce the main result we present a result by Weyl \cite{weyl1912asymptotische} which characterises the perturbation in the eigenvalues of a matrix.

\begin{theorem}[Weyl, \cite{weyl1912asymptotische}] \label{thm:weyl} 
Define $\Am \in \mathbb{R}^{n \times n}$  and let $\tilde{\Am} = \Am + \Em$ be its perturbation. The eigenvalues of $\Am$ and $\Em$ are given by $\lambda_i$ and  $\epsilon_i$, $i=1,\ldots, n$, respectively. Then the eigenvalues of $\tilde{\Am}$ are, for $i=1, \ldots, n$, $\tilde{\lambda}_i \in [\lambda_i + \epsilon_n, \lambda_i + \epsilon_1]$. 
\end{theorem}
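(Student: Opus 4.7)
The plan is to prove this via the Courant--Fischer min--max characterization of eigenvalues, which is the standard route to Weyl-type inequalities. Order the eigenvalues of $\Am$, $\Em$, and $\tilde{\Am}$ in descending order as $\lambda_1 \geq \cdots \geq \lambda_n$, $\epsilon_1 \geq \cdots \geq \epsilon_n$, and $\tilde{\lambda}_1 \geq \cdots \geq \tilde{\lambda}_n$ respectively. The Courant--Fischer theorem states that for any symmetric (Hermitian) $n \times n$ matrix $\Mm$ and any $1 \leq i \leq n$,
\begin{equation*}
\lambda_i(\Mm) = \max_{\substack{S \subset \mathbb{R}^n \\ \dim S = i}} \min_{\substack{\xv \in S \\ \|\xv\|=1}} \xv^T \Mm \xv.
\end{equation*}
I would cite this as a standard fact rather than reprove it.

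The key observation is that since $\Em$ is symmetric with extreme eigenvalues $\epsilon_1$ and $\epsilon_n$, for every unit vector $\xv \in \mathbb{R}^n$ we have $\epsilon_n \leq \xv^T \Em \xv \leq \epsilon_1$. Thus for any subspace $S$ of dimension $i$, writing $\xv^T \tilde{\Am} \xv = \xv^T \Am \xv + \xv^T \Em \xv$, one obtains
\begin{equation*}
\min_{\substack{\xv \in S \\ \|\xv\|=1}} \xv^T \Am \xv + \epsilon_n \;\leq\; \min_{\substack{\xv \in S \\ \|\xv\|=1}} \xv^T \tilde{\Am} \xv \;\leq\; \min_{\substack{\xv \in S \\ \|\xv\|=1}} \xv^T \Am \xv + \epsilon_1.
\end{equation*}
Taking the maximum over all $i$-dimensional subspaces $S$ and applying the min--max characterization to both $\Am$ and $\tilde{\Am}$ yields
\begin{equation*}
\lambda_i + \epsilon_n \;\leq\; \tilde{\lambda}_i \;\leq\; \lambda_i + \epsilon_1,
\end{equation*}
which is exactly the statement of the theorem.

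The only slightly delicate point is making sure that the additive constants $\epsilon_n$ and $\epsilon_1$ survive the outer maximization over subspaces $S$: this works because these bounds hold uniformly for all unit vectors, hence they are preserved when taking a minimum over $S$ and then a maximum over choices of $S$. Beyond that the argument is essentially a one-line consequence of Courant--Fischer, so I do not anticipate any substantive obstacle.
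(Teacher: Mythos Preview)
Your argument via the Courant--Fischer min--max principle is correct and is the standard proof of Weyl's inequality; the only implicit assumption you should make explicit is that $\Am$ and $\Em$ are symmetric (Hermitian), which is needed both for Courant--Fischer to apply and for the eigenvalues to be real and ordered. The paper itself does not supply a proof of this theorem: it is stated as a classical result with a citation to Weyl and then used as a black box in the proof of the subsequent perturbation bound, so there is no ``paper's proof'' to compare against beyond noting that your derivation is the canonical one.
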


We can now present our main result which is closely related to the Davis-Kahan theorem \cite{davis1970rotation}. 

\begin{theorem} 
Consider a positive semi-definite matrix $\Am \in \mathbb{R}^{n \times n}$ with eigenvalues $\omega_1, \ldots, \omega_n$ and corresponding eigenvectors $\Qm = [\qv_1, \ldots, \qv_n]$. Let $\Bm \in \mathbb{R}^{n \times n}$ be symmetric with eigenvalues $\epsilon_i$ and $(\Am + \Bm)$ be positive semi-definite with eigen-decomposition $\Um \Gammam \Um^T$ where $\gamma_i$ are eigenvalues, $i=1, \ldots, n$. Fix integer $k$, let $\Am_k + \Bm$ have decomposition $\Vm\Pim\Vm^T$ with eigenvalues $\pi_1, \ldots, \pi_n$ and assume $\gamma_k \neq \gamma_{k+1}$. Then the following bounds hold on the canonical angles between the subspaces defined by $\Um_k$ and $\Vm_k$, assuming $\pi_k > \hat{\gamma}_{k+1}$, 

\begin{equation} 
 \|\sin \Theta(\mathcal{R}(\Um_k), \mathcal{R}(\Vm_k))  \|_F \leq \frac{\sqrt{\tr(\Vm_k^T\Am^2_{k^\bot}\Vm_k)}}{\pi_k - \hat{\gamma}_{k+1}},
\end{equation}
\begin{equation} 
 \|\sin \Theta(\mathcal{R}(\Um_k), \mathcal{R}(\Vm_k))  \|_2  \leq \frac{\sqrt{\lambda_{max}(\Vm_k^T\Am^2_{k^\bot}\Vm_k)}}{\pi_k - \hat{\gamma}_{k+1}},
\end{equation}
where $ \hat{\gamma}_{k+1} = \omega_{k+1} + \pi_{k+1}$. 
\label{thm:pertTheorem}\end{theorem}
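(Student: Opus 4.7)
The plan is to invoke Theorem \ref{thm:canAngle} with the Hermitian matrix chosen as $\Am + \Bm$ (whose top-$k$ invariant subspace is $\mathcal{R}(\Um_k)$) and the test matrix $\Zm = \Vm_k$ with associated $\Mm = \Pim_k$. The essence is to show (i) the residual $\Rm = (\Am+\Bm)\Vm_k - \Vm_k\Pim_k$ collapses to $\Am_{k^\bot}\Vm_k$ and (ii) Weyl's inequality produces an admissible spectral gap $\delta$.

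First I would observe that $\Vm_k$ and $\Pim_k$ are eigenvectors/eigenvalues of $\Am_k+\Bm$, so $(\Am_k+\Bm)\Vm_k = \Vm_k\Pim_k$. Subtracting this from $(\Am+\Bm)\Vm_k$ gives
\begin{equation*}
\Rm = (\Am+\Bm)\Vm_k - \Vm_k\Pim_k = (\Am - \Am_k)\Vm_k = \Am_{k^\bot}\Vm_k.
\end{equation*}
Since $\Am_{k^\bot}$ is symmetric, $\Rm^T\Rm = \Vm_k^T\Am_{k^\bot}^2\Vm_k$, and thus $\|\Rm\|_F^2 = \tr(\Vm_k^T\Am_{k^\bot}^2\Vm_k)$ and $\|\Rm\|_2^2 = \lambda_{\max}(\Vm_k^T\Am_{k^\bot}^2\Vm_k)$, which matches the numerators in the target bounds.

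Next I would identify the Davis-Kahan gap. In Theorem \ref{thm:canAngle} the invariant subspace corresponding to $\Lm_1$ will be $\mathcal{R}(\Um_k)$, so $\lambda(\Lm_1) = \{\gamma_1,\ldots,\gamma_k\}$ and $\lambda(\Lm_2) = \{\gamma_{k+1},\ldots,\gamma_n\}$. Choosing $\alpha = \pi_k$ and $\beta = \pi_1$ makes $\lambda(\Mm) = \lambda(\Pim_k) \subset [\alpha,\beta]$. I then need to control $\gamma_{k+1}$ from above so that $\lambda(\Lm_2) \cap [\alpha - \delta, \beta + \delta] = \emptyset$ for a usable $\delta$. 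Writing $\Am+\Bm = (\Am_k+\Bm) + \Am_{k^\bot}$ and applying Weyl's theorem (Theorem \ref{thm:weyl}) to this additive perturbation yields $\gamma_{k+1} \leq \pi_{k+1} + \lambda_{\max}(\Am_{k^\bot}) = \pi_{k+1} + \omega_{k+1} = \hat\gamma_{k+1}$. Under the assumption $\pi_k > \hat\gamma_{k+1}$, setting $\delta = \pi_k - \hat\gamma_{k+1} > 0$ gives $\gamma_{k+1} \leq \hat\gamma_{k+1} = \alpha - \delta$, and since all remaining $\gamma_j \leq \gamma_{k+1}$ lie even lower, the separation condition of Theorem \ref{thm:canAngle} is satisfied.

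Putting the pieces together, Theorem \ref{thm:canAngle} delivers $\|\sin\Theta(\mathcal{R}(\Um_k),\mathcal{R}(\Vm_k))\| \leq \|\Rm\|/\delta$ for any unitarily invariant norm, and substituting the Frobenius and spectral expressions for $\|\Rm\|$ together with $\delta = \pi_k - \hat\gamma_{k+1}$ yields both stated inequalities. The step I expect to require the most care is verifying the gap condition: one must be confident that the correct direction of Weyl's inequality is applied to bound $\gamma_{k+1}$ from above by $\hat\gamma_{k+1}$ (rather than below), and that $\mathcal{R}(\Um_k)$ is a simple invariant subspace so that Theorem \ref{thm:canAngle} is legitimately applicable; the assumption $\gamma_k \neq \gamma_{k+1}$ in the hypothesis is precisely what ensures this.
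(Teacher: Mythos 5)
Your proof is correct and follows essentially the same route as the paper: choose $\Zm=\Vm_k$, $\Mm=\Pim_k$ so that the residual collapses to $\Am_{k^\bot}\Vm_k$, then apply Weyl to the split $\Am+\Bm=(\Am_k+\Bm)+\Am_{k^\bot}$ to obtain $\gamma_{k+1}\le\pi_{k+1}+\omega_{k+1}=\hat\gamma_{k+1}$ and hence the gap $\delta=\pi_k-\hat\gamma_{k+1}$. The only cosmetic difference is that the paper additionally considers the alternative Weyl bound $\gamma_{k+1}\le\omega_{k+1}+\epsilon_1$ and takes the minimum, but it ends up using the same $\hat\gamma_{k+1}$.
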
 
\begin{proof}
We will start by considering the first bound. It is clear that $\Um_k$ is a simple invariant subspace for $(\Am + \Bm)$. The spectral resolution of $\Am + \Bm$ is given by $[\Um_k \Um_{k^\bot}]$ since this matrix is unitary and we have $\Am+\Bm = \Um_k\Um_k^T(\Am +\Bm)\Um_k\Um_k^T + \Um_{k^\bot}\Um_{k^\bot}^T(\Am+\Bm)\Um_{k^\bot}\Um_{k^\bot}^T$. In the reduced form $\Lm_1 = \Gammam_k$ and $\Lm_2 = \Gammam_{k^\bot}$. Furthermore, we set $\Mm = \Pim_k$ and $\Zm = \Vm_k$. The residual matrix is given by $\Rm = (\Am+\Bm)\Vm_k - \Vm_k \Pim_k = \Am_{k^\bot}\Vm_k$ and 

\begin{eqnarray} 
 \|\Rm\|_F &=& \sqrt{\tr(\Vm_k^T\Am^2_{k^\bot}\Vm_k)}.  
\end{eqnarray}
We know that the eigenvalues of $\Mm$ fall within the range $[\pi_k, \pi_1]$ and those of $\Lm_2$ are bounded using Theorem \ref{thm:weyl} in the range $\gamma_i \in [\pi_i + \omega_n, \pi_i+\omega_{k+1}]$ for $i=k+1, \ldots, n$. Considering also the perturbation of eigenvalues of $\Am$ we can write $\gamma_i \leq \hat{\gamma}_i = \min(\omega_{k+1} + \pi_{k+1}, \omega_{k+1} + \epsilon_1) = \omega_{k+1} + \pi_{k+1}$ given $\pi_{k+1} \leq \omega_{k+1} + \epsilon_1$.  It follows that $\delta = \pi_k - \hat{\gamma}_{k+1}$ and plugging into Theorem \ref{thm:canAngle} gives the required result.  

The second bound is proved similarly except in this case we have 
\begin{eqnarray*} 
 \|\Rm\|_2 &=& \sqrt{\lambda_{max}(\Vm_k^T\Am^2_{k^\bot}\Vm_k)}.
\end{eqnarray*}

\end{proof}

Thus we have a bound on the angle between the subspace of the first $k$ eigenvectors of $\Am_k+ \Bm$ and the corresponding eigenvectors of it perturbation $\Am+\Bm$ without explicitly requiring the eigen-decomposition of $\Am+\Bm$. Provided the eigenvalues of $\Vm_k^T\Am^2_{k^\bot}\Vm_k$ are small and the eigengap $\pi_k - \hat{\gamma}_{k+1}$ is large one can be sure that the two subspaces have small canonical angles. One can see that under small perturbations $\Vm_k$ is close to $\Qm_k$ and hence $\Vm_k^T\Qm_{k^\bot}$ is small, resulting in tight bounds in the angles. In the case that the matrices involved correspond to normalised Laplacians this result corresponds well with similar results outlining a perturbation-based motivation of spectral cluster (see e.g. \cite{von2007tutorial}) which state that if the value of $\gamma_k - \gamma_{k+1}$ is large then one might reasonably expect a good clustering. The bound becomes loose when this eigengap is small, however in this case the clusters are less distinct even when computing the exact eigenvectors.     

\section{Incremental Cluster Membership} \label{sec:incrementalCluster}

We now return to the eigenproblem of Algorithm \ref{alg:normCluster}, $\tilde{\Lm}\vv = \lambda \vv$, in which we are interested in the eigenvectors with the smallest eigenvalues. Define the \emph{shifted Laplacian} as 

$$
\hat{\Lm} = 2\Imat - \tilde{\Lm} = \Imat + \Dm^{-\frac{1}{2}}\Wm\Dm^{-\frac{1}{2}},
$$
which is positive semi-definite since $\tilde{\Lm}$ is positive semi-definite with largest eigenvalue $2$. Note that by negating a matrix one negates the eigenvalues, leaving the eigenvectors the same, and similarly an addition of $\sigma\Imat$ increases the eigenvalues by $\sigma$ leaving the eigenvectors intact. Since we are interested in the smallest eigenvectors of $\tilde{\Lm}$ they correspond exactly to the maximum eigenvectors of $\hat{\Lm}$ and we can use the eigen-update methods described above. Observe that the shifted Laplacian is a normalised version of the \emph{signless Laplacian} \cite{haemers2004enumeration} defined as $\Lm^{+} = \Dm + \Wm$, and this can be seen from $\hat{\Lm} = \Dm^{-\frac{1}{2}}(\Dm + \Wm)\Dm^{-\frac{1}{2}} = \Dm^{-\frac{1}{2}}\Lm^{+}\Dm^{-\frac{1}{2}}$. 

Putting the ingredients together allows us to outline an efficient incremental method for performing graph clustering called Incremental Approximate Spectral Clustering (IASC), see Algorithm \ref{alg:incNormCluster}. At a high level the algorithm is quite simple: in the initialisation steps one computes the shifted Laplacian matrix for the first graph $\hat{\Lm}_1$ and then performs $k$-means clustering using the largest $k$ eigenvectors of this matrix. For the $t$-th successive graph, $t > 1$, we use the eigen-update methods above to approximate the largest eigenvectors of $\hat{\Lm}_t$ using the approximate eigenvectors computed at the previous iteration in step \ref{step:eigenConcat}. For these updates, it is simple to recover the matrices $\Ym_1$ and $\Ym_2$ given a change in edge weights. Note that we use the first method of Section  \ref{subsec:eigenConcat}  to compute eigenvector upon the addition of row and columns. Furthermore, we store the first $\ell$ eigenvectors of $\hat{\Lm}_1$, with $\ell 
\geq k$, and recompute eigenvectors every $T$ iterations in order to reduce cumulative errors  introduced in the loop at the expense of increased computation. When $\ell = n$ one recovers the exact eigenvectors at each iteration and Algorithms \ref{alg:incNormCluster} and \ref{alg:normCluster} become nearly equivalent. In the case that there is a significant eigengap between eigenvalues, one can fix both $\ell$ and $k$ according to Theorem \ref{thm:pertTheorem}.

\begin{algorithm}
\caption{Incremental Approximate Spectral Clustering}
\begin{algorithmic}[1]
\REQUIRE Graphs $G_1, \ldots, G_T$ of sizes $n_1, \ldots, n_T$, no. clusters $k$, approximation rank $\ell \geq k$, eigen-decomposition recomputation step $R$ 
\STATE Compute the shifted Laplacian for $G_1$, $\hat{\Lm}_1$ 
\STATE Find $\ell$ largest eigenvectors of $\hat{\Lm}_1$,  $\Vm_{\ell}^{(1)}$, let $\Vm_{k}^{(1)}$ be the first $k$ cols of $\Vm_{\ell}^{(1)}$
\STATE Normalise the eigenvector rows, $\Vm_{k}^{(1)} \leftarrow \diag(\Vm_{k}^{(1)}(\Vm_{k}^{(1)})^T)^{-\frac{1}{2}}\Vm_{k}^{(1)}$ 
\STATE Use $k$-means on rows of $\Vm_{k}^{(1)}$ and store indicators $\cv_1 \in \{1,\ldots,k\}^{n_1}$
\FOR{$t = 2 \to T$} 
\STATE Compute shifted Laplacian for $G_t$, $\hat{\Lm}_t$
\label{step:computeLaplacian}
\IF {$i\;\%\;R = 0$} 
  \STATE Recompute eigenvectors of $\hat{\Lm}_t$
\ELSE
\STATE Use rank-$\ell$ eigen-approximation of Section \ref{sec:our_work} \label{step:eigenConcat}
\ENDIF  
\STATE Normalise rows of $\Vm_{k}^{(t)}$, $\Vm_{k}^{(t)} \leftarrow \diag(\Vm_{k}^{(t)}(\Vm_{k}^{(t)})^T)^{-\frac{1}{2}}\Vm_{k}^{(t)}$ 
\STATE Use $k$-means on $\Vm_{k}^{(t)}$ with initial centroids $\cv_{t-1}$, store $\cv_t \in \{1,\ldots,k\}^{n_t}$
\ENDFOR
\RETURN Cluster membership $\cv_1 \in \{1,\ldots,k\}^{n_1}, \ldots, \cv_T \in \{1,\ldots,k\}^{n_T}$  
\end{algorithmic}\label{alg:incNormCluster}
\end{algorithm}

The complexity of Algorithm \ref{alg:incNormCluster} is dictated by the sparsity of the graphs and the extent of the change between successive graphs. We ignore the steps before the for-loop since in general they do not impact the overall complexity. At iteration $t$ and step \ref{step:computeLaplacian} one can compute $\hat{\Lm}_t$ from the weight and degree matrix at a cost of $\mathcal{O}(n_t + |E_t|)$. In the following step if there is a change between edges incident to vertices $S = \{v_{I_1}, \ldots ,v_{I_\ell}\}$ then the rows and columns corresponding to the union of the neighbours of $S$, $n(S)$, will change in the corresponding shifted Laplacian. In this case $p = |n(S)|$ in $\Ym_1, \Ym_2 \in \mathbb{R}^{n \times p}$ and if the neighbourhood of the vertices with changed edges $n(S)$ is small, this update can be efficiently computed as outlined in Section \ref{sec:our_work}. The cost of $k$-means is $\mathcal{O}(k^2ns)$ where $s$ is the number of iterations required for convergence. 

\section{Computational Results} 
\label{sec:exp}

In this section we study the clustering quality of five incremental strategies: the naive one which computes the exact eigenvectors at each iteration, denoted \texttt{Exact}, Ning et al.'s method (\texttt{Ning}) since it is a competing incremental strategy, \texttt{IASC},  \texttt{Nyst}, which uses the Nystr\"{o}m eigen-decomposition approximation of the shifted Laplacian, since it is often used in spectral clustering, and likewise the Randomised SVD method of Algorithm \ref{alg:randomSVD} denoted \texttt{RSVD}. First, however we observe the quality of the eigenvectors found using our eigen-updating approach.

\subsection{Quality of Approximate Eigenvectors}
\label{sec:eigen_quality}

We compare \texttt{Nystr\"{o}m} and \texttt{RSVD} approximations with our eigen-updating methods on a synthetic dataset generated in the following way: The initial graph contains 4 clusters of size 250 which are generated using an Erd\"{o}s-R\'{e}nyi \cite{erdos1959random} process with edge probability $p=0.1$. The Erd\"{o}s-R\'{e}nyi process creates edges independently randomly with a fixed probability $p$ for all pairs of vertices. For each successive graph we then add 50 random edges to simulate ``noise'' in the clusters for a total of 100 graphs. We then compute the largest $k=4$ eigenvectors of the shifted Laplacian using the full eigen-decomposition, the \texttt{Nystr\"{o}m} method, \texttt{RSVD} and the eigen-updating approach of Section \ref{sec:our_work}. For the \texttt{Nystr\"{o}m} method we use $m = 900$ randomly selected columns, for \texttt{RSVD} we use $r=\{100,~900\}$ random projections, and for the eigen-updating approach we update based on $\ell=\{4,~300\}$ approximate eigenvectors and eigenvalues found for the previous graph. The quality of the approximations is measured by $\|\sin \Theta(\mathcal{R}(\Um_k), \mathcal{R}(\Vm_k))\|_F$ where $\Um_k$ stands for the exact eigenvectors and $\Vm_k$ corresponds to the eigenvectors returned by one of the three considered eigen-approximation strategies. Applying Corollary I.5.4 of \cite{stewart1990matrix}, this norm is equal to $\|\Um_{k^\bot}\Vm_k\|_F$. The experiment is repeated with results averaged over 20 iterations. 

Observe that the quality of the approximation decreases with the noise amplitude, see Figure \ref{fig:nystromEigPlot}. For the three approximation strategies, the more eigenvectors/columns/random projection they consider, the smaller the canonical angles with the exact decomposition as one might expect. \texttt{Nystr\"{o}m} leads to a poor approximation whilst using 90\% of the columns. \texttt{RSVD} has poor results with 100 random projections, but is close to the exact decomposition with 900 random projections. The eigen-update strategy always does a better job than keeping the initial decomposition, and in this case there does not appear to be a large gain when moving from 300 eigenvectors to just 4. In the case in which we use the complete set of eigenvectors we would expect IASC to coincide with exact eigen-decomposition. 

\subsubsection{Perturbation Bound}

We now turn to demonstrating the effectiveness of the bound of Theorem \ref{thm:pertTheorem} on a synthetic dataset containing 150 vertices. The initial graph contains 3 clusters of size 50 which are generated using an Erd\"{o}s-R\'{e}nyi process with edge probability $p=0.3$, resulting in 1092 edges. For each successive graph we add 10 random edges and in total there is a sequence of 80 graphs. We then compute the bound of Theorem \ref{thm:pertTheorem} to measure the difference between the canonical angles of the real and approximated eigenvectors of the shifted Laplacian. We compare the results to the bound using the real eigenvalues of the Laplacian, i.e. $\delta = \pi_k - \gamma_{k+1}$. This process is repeated 50 times with different random seeds and the results are averaged. 

\begin{figure}[ht]
\begin{center}
\subfigure{\includegraphics[width=0.45\linewidth]{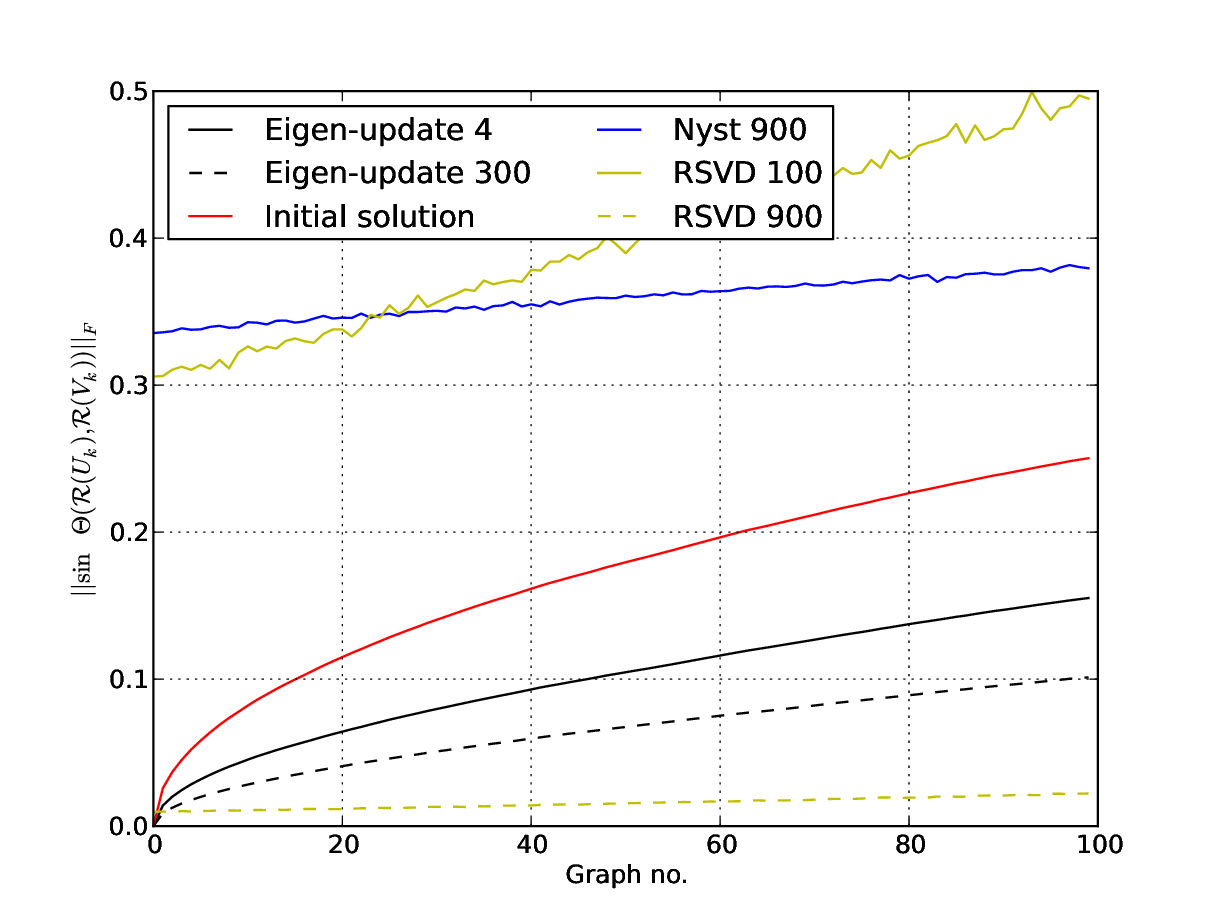}\label{fig:nystromEigPlot}} 
\subfigure{\includegraphics[width=0.45\linewidth]{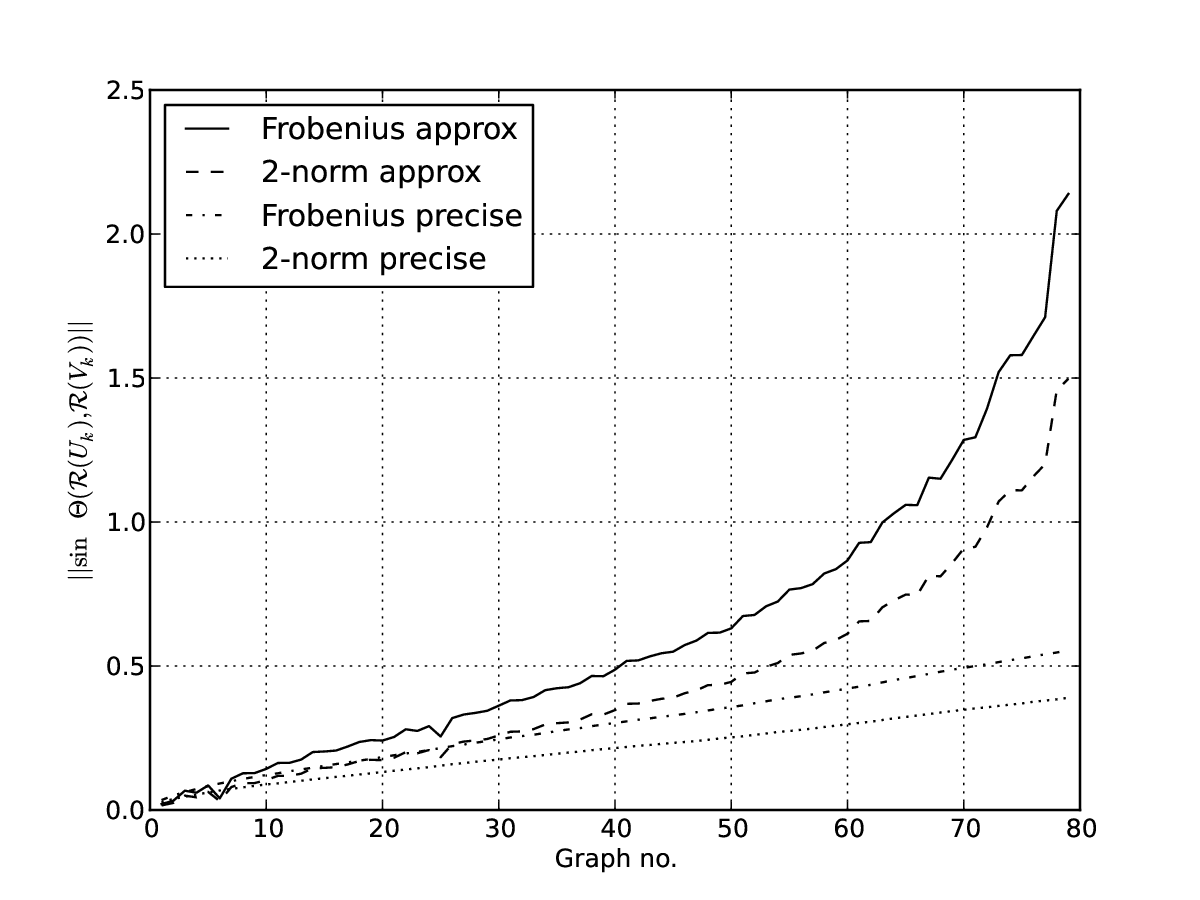}\label{fig:pertBoundPlot}}
\end{center} 
\caption{The left plot compares the canonical angles of the largest eigenvectors found using the approximation methods with the equivalent real eigenvectors. On the right we compare the perturbation bounds of Theorem \ref{thm:pertTheorem}. By ``precise'' we mean that we use $\delta = \pi_k - \gamma_{k+1}$ in the theorem.}
\end{figure}

Figure \ref{fig:pertBoundPlot} shows the resulting bounds for this sequence of graphs. The bound diverges slowly from the precise bound for many of the initial graphs, for example $\|\sin \Theta(\mathcal{R}(\Um_k), \mathcal{R}(\Vm_k))\|_F$ is bounded by $0.616\pm0.106$ versus $0.352\pm0.024$ at the 50th graph and the equivalent results for the 2-norm are $0.434\pm0.007$ and $0.248\pm0.02$. Soon after this point, we see a large divergence in the precise and approximate bounds, although the approximate bounds become trivial after the 78th graph for the Frobenius norm and the 73rd for the 2-norm corresponding to the addition of 770 and 730 edges respectively to the initial graph. When we look at the precise bound, one can see that at the last graph  $\|\sin \Theta(\mathcal{R}(\Um_k), \mathcal{R}(\Vm_k))\|_2 \leq 0.39$ despite nearly doubling the number of edges which points to the precision of the eigen-approximation in this case. 

\subsection{Clustering on Synthetic Data} 

To evaluate the clustering approaches, two separate synthetic datasets are considered, both of which are generated by an Erd\"{o}s-R\'{e}nyi process. The first dataset, called \texttt{3clust}, is based on a graph of 3 clusters of $60$ vertices each. This dataset allows us to compare the clustering quality as the clusters become more/less distinct and also the addition and removal of edges.   In the corresponding graph, any possible edge between two points from the same cluster occurs with probability $p_c = 0.3$ and the probability of edges between vertices in different clusters is selected from $p_g \in \{0.1, 0.2\}$. To generate a sequence of graphs we first allow only 20 vertices per cluster and then add 5 vertices to each cluster at a time until each one is of size 60. We then reverse the process so that the 10th graph is the same as the 8th, and the 11th is the same as the 7th etc., which allows us to test the clustering methods upon the removal of vertices. 

The second dataset, \texttt{discrepancy}, aims to examine the clustering methods on a more complex set of hierarchical clusters. Here, $180$ vertices are split into 3 clusters of equal size, and inside each cluster there are 3 sub-clusters. The initial graph is empty, and at each iteration $t\in\{1,\;\ldots,\;T\}$, $T=23$, an edge $\{v_i,v_j\}$ is added with a probability $p_\varepsilon+(p-p_\varepsilon)\ell t/T$, where $\ell$ is set to $1$ if $v_i$ and $v_j$ are in the same subgroup, $0.5$ if $v_i$ and $v_j$ are in the same group, and $0$ otherwise. In our case we set $p_\varepsilon = 0.0005$ and $p = 0.01$.  

For both datasets and the clustering approaches, $k$-means is run with $k$ corresponding to the number of clusters (3 and 9 for \texttt{3clust} and \texttt{discrepancy} respectively). For \texttt{IASC} we fix the number of eigenvectors $\ell \in \{3, 6, 12,  24\}$ with the \texttt{3clust} dataset and $\ell \in \{9, 72\}$ with \texttt{discrepancy} in order to test the approximation quality as this parameter varies. With \texttt{Nystr\"{o}m} we sample $m=90$ columns of the Laplacian matrix to find the approximate eigenvectors on \texttt{3clust} and select $m \in \{9, 72\}$ for \texttt{discrepancy}.
\texttt{RSVD} uses $r=24$ random projections on \texttt{3clust} and   $r \in \{9, 72\}$ for \texttt{discrepancy}.
On \texttt{discrepancy}, the approximation methods start with the 3rd graph in the sequence to allow the initial graph to contains enough edges. The experiments are repeated 50 times with different random graphs constructed using the methods described above, and the results are averaged. Clustering accuracy is measured through the \textit{Rand Index} \cite{Rand71} between the finest true clustering $\cal C$ and the learned one $\widehat{\cal C}$. Rand Index corresponds to the proportion of true answers to the question ``\emph{Are vertices $v_i$ and $v_j$ in the same cluster ?}''. More formally, it is given by
\[
 \operatorname{RandIndex}({\cal C},\widehat{\cal C}) =
\frac{\left|\left\{
v_i,v_j\in V :\; v_i \neq v_j,\;
\delta({\cal C}(v_i), {\cal C}(v_j))\neq \delta(\widehat{\cal C}(v_i),
\widehat{\cal C}(v_j)) \right\}\right|}{\left|\left\{v_i,v_j\in V :\; v_i \neq v_j\right\}\right|}
,
\]
where ${\cal C}(v_i)$ stands for the cluster index of vertex $v_i$ after clustering ${\cal C}$, $\delta$ is the Kronecker delta function and $| \mathcal{E}|$ denotes the cardinality of any finite set $\mathcal{E}$. The evaluation is completed by the computation of $\|\sin \Theta(\mathcal{R}(\Um_k), \mathcal{R}(\Vm_k))\|_F$, see Section \ref{sec:eigen_quality} for more details. The canonical angles for \texttt{Ning} are not given as \texttt{Ning} uses the random-walk Laplacian which (i) differs from the normalised Laplacian used with other approaches and (ii) is not symmetric, leading to non-orthogonal eigenvectors.

\newcommand{\subfigsize}{0.5}

\begin{figure}[ht]
\begin{center}
\subfigure[Rand Index, $p=0.1$]{\includegraphics[width=\subfigsize\linewidth]{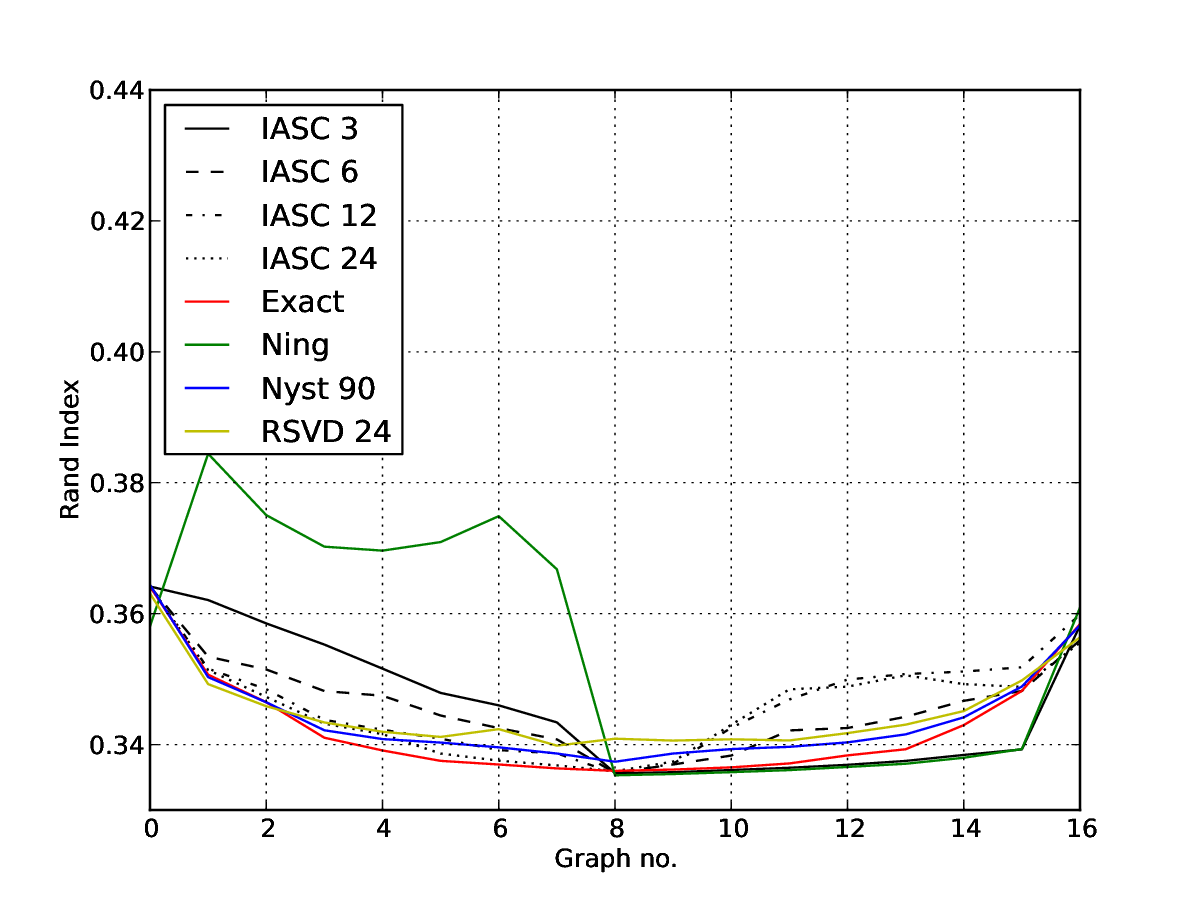}}%
\hfill%
\subfigure[Rand Index, $p=0.2$]{\includegraphics[width=\subfigsize\linewidth]{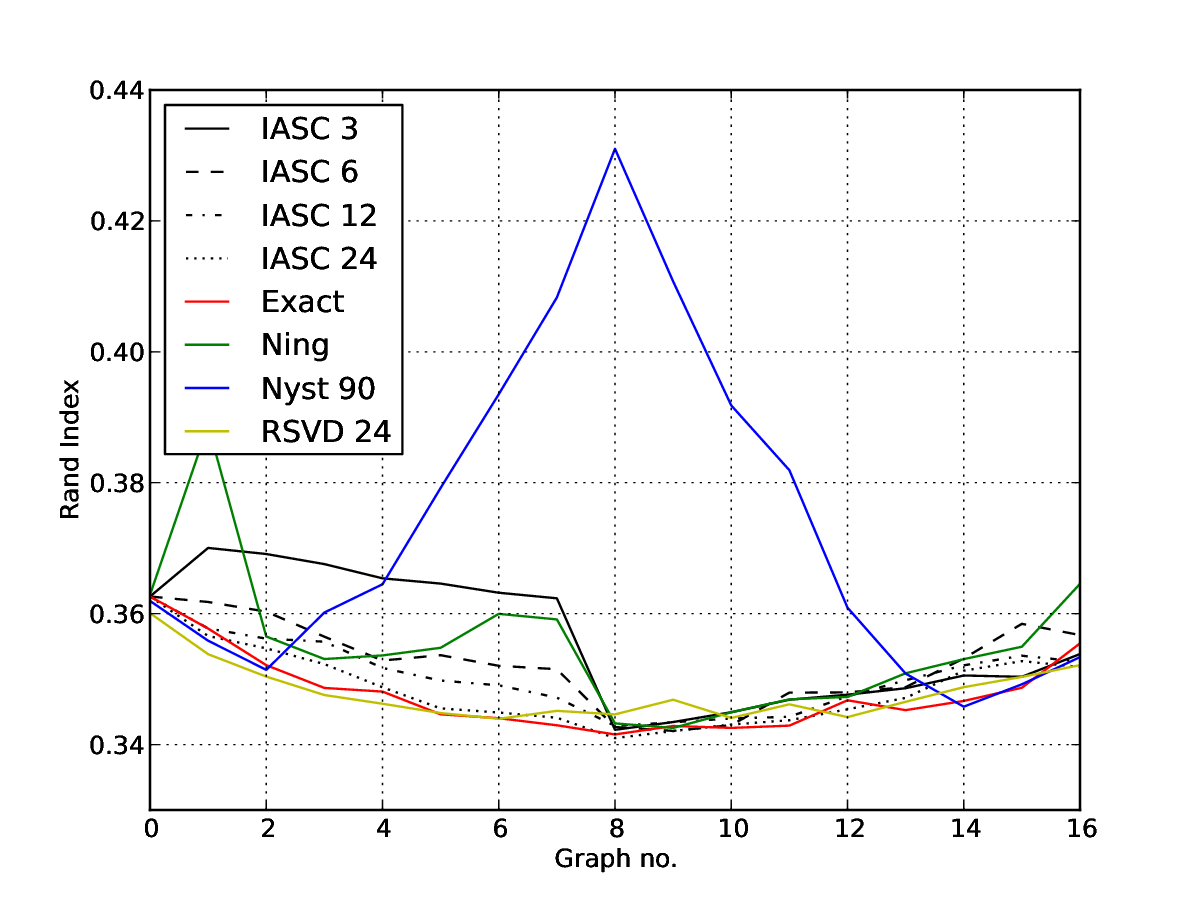}}%
\\
\subfigure[Canonical angles, $p=0.1$]{\includegraphics[width=\subfigsize\linewidth]{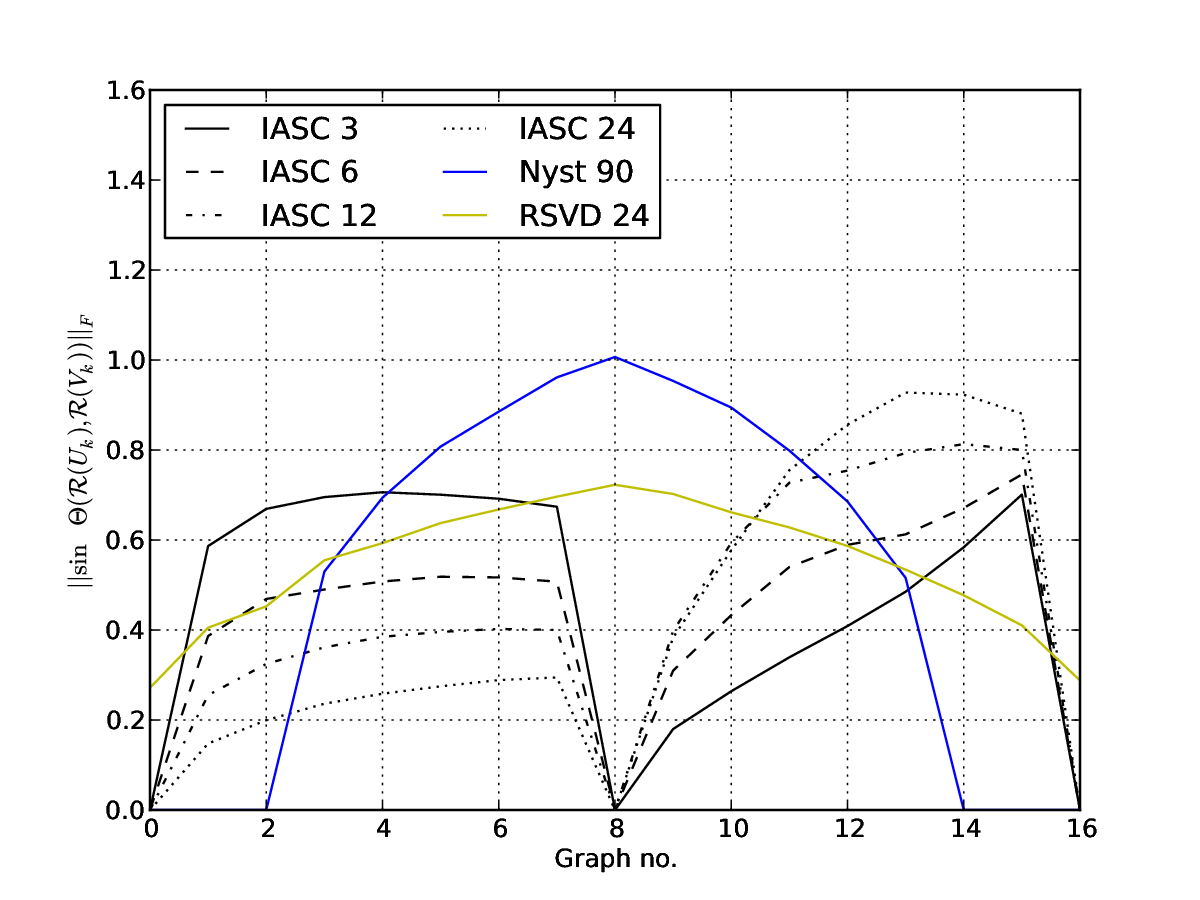}}%
\hfill%
\subfigure[Canonical angles, $p=0.2$]{\includegraphics[width=\subfigsize\linewidth]{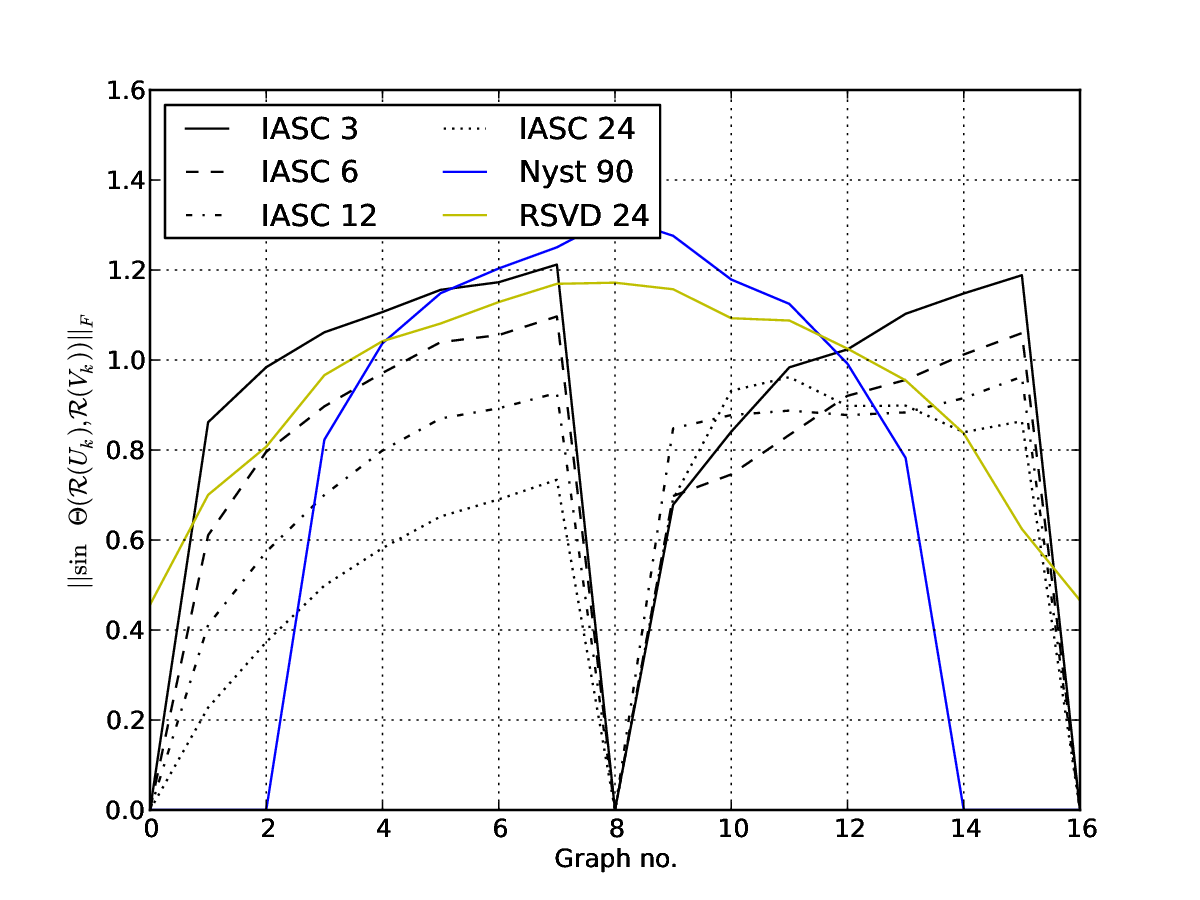}}%
\end{center} 
\caption{The mean Rand Index error of the learned clustering on \texttt{3clust}. The numbers after \texttt{IASC} denote the number of eigenvectors computed. }
\label{fig:syntheticErrors}
\end{figure}

The errors on \texttt{3clust} are shown in Figure \ref{fig:syntheticErrors}. The first point to note is that the errors decrease as the cluster size increases up until graph 8 and then increase again as vertices are removed. This is explained by the fact that as the cluster size increases there are more edges within each cluster relative to those between clusters, hence they becomes easier to identify. Notice also that the approximation methods perform worse (in terms of Rand Index) than \texttt{exact} except for \texttt{Ning} and \texttt{IASC} $\ell=3$ when vertices are removed. This exception is due to a coincidence: for any iteration $i \in 8,...,16$, clustering results are better when using the eigen-decomposition at time 8 as they are based on more data. On the other hand, \texttt{Ning} and \texttt{IASC}, $\ell=3$, do not accurately update the eigen-decomposition and hence the eigenvectors are close to those at graph 8. As a consequence, both approaches lead to better results than \texttt{exact}. Apart from this exception, \texttt{IASC}, $\ell = 24$, and \texttt{RSVD} have results close to \texttt{exact}, while \texttt{Nystr\"{o}m} leads to a bad Rand Index score for $p=0.2$. \texttt{Ning} has non-smooth behaviour for any value of $p$. Lastly, the results of \texttt{Nystr\"{o}m} when $p=0.1$ demonstrate the limit of canonical angles to measure the clustering quality of an approximation approach: having large canonical angles values does not necessarily imply a Rand Index score far from \texttt{exact}. 

It is worth noting that since $\ell$ is a fixed value for \texttt{IASC}, the approximation of the largest eigenvectors of the shifted Laplacian represents a smaller fraction of the total sum of eigenvalues as the cluster size increases. When using 24 eigenvectors for the approximation on the 2nd largest graph one requires approximately $14.5\%$ of the dimensionality of the Laplacian, yet Rand Index is close to that of \texttt{exact}. 

\begin{figure}
\begin{center}
\subfigure[Rand Index]{\includegraphics[width=\subfigsize\linewidth]{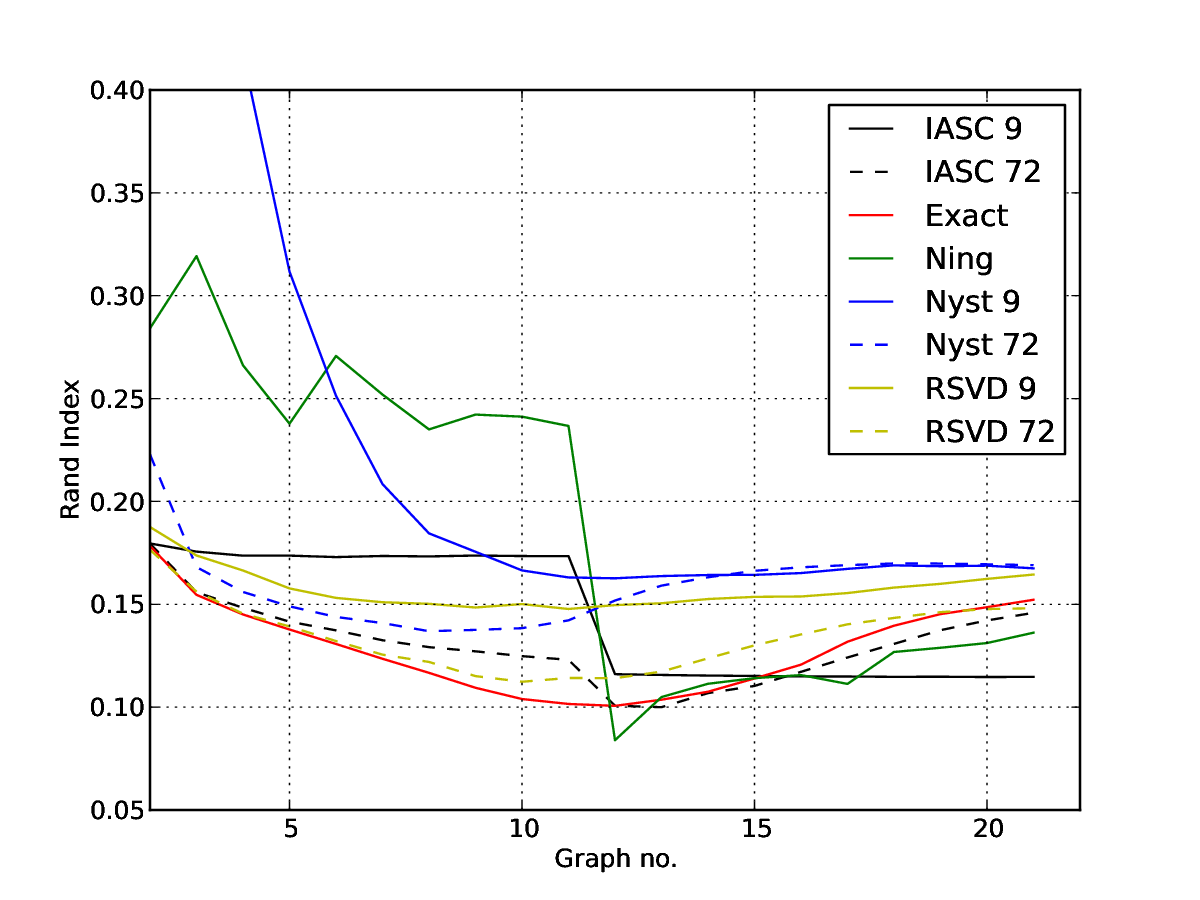}}%
\hfill%
\subfigure[Canonical angles]{\includegraphics[width=\subfigsize\linewidth]{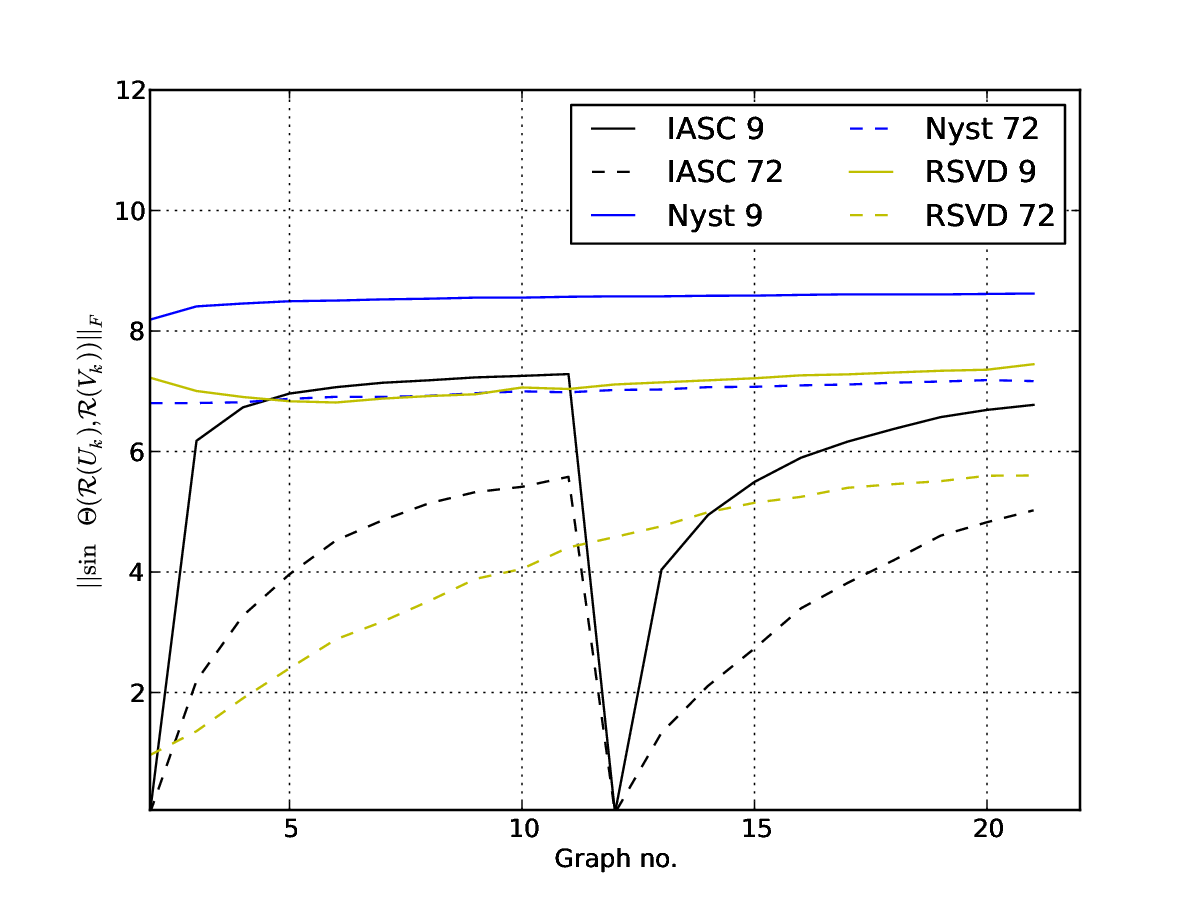}}%
\end{center} 
\caption{The mean Rand Index of the learned clusterings on \texttt{discrepancy}. For \texttt{IASC} and \texttt{Nystr\"{o}m} the numbers in the legend indicates the number of eigenvectors and columns sampled at each iteration. }
\label{fig:discrepancyErrors}
\end{figure}

The results on \texttt{discrepancy} (Figure \ref{fig:discrepancyErrors}) show that \texttt{Ning} generally does not generate accurate clustering until the 12th graph when eigenvectors are recomputed, however it becomes competitive after this point. This is partly due to the fact that one solves a different eigen-system to the other methods which is not as effective at clustering for the initial graphs. As we increase the number of columns used for the Nystr\"{o}m approach or the random projections used by \texttt{RSVD}, the Rand Index values improve for the first few graphs in which the clusters are generally not well defined. In contrast, \texttt{IASC} results are relatively accurate with 72 eigenvectors. Surprisingly at the final graph we see that with just 9 eigenvectors \texttt{IASC} results in the most accurate clustering. At this point more eigenvectors seem to make the solution worse, with the exact eigen-decomposition being less accurate than \texttt{IASC} with 72 eigenvectors. One explanation is that an accurate eigen-decomposition fits ``noise'' in the Laplacian which is excluded with \texttt{IASC} and a low value of $\ell$.   

\subsection{Real-world Graphs} 
We now apply the clustering methodology analysed previously on several real datasets.
\subsubsection{Setup} 

Here we use the clustering methods on three real datasets. The first one represents individuals in Cuba who are detected as HIV positive between the period 1986 and 2004, see \cite{Auvert07} for details of the related database. Edges in the graph indicate the occurrence of a sexual contact between two individuals as determined using \emph{contact tracing}, whereby contacts of an infected person are identified and tested. The full sexual contact graph at the end of 2004 consists of 5389 people however it is strongly disconnected and we consider the growth of the largest component of size 2387. We find graphs of detected individuals at 1 month intervals, starting when the graph contains at least 500 vertices. At any particular time point, we consider the component containing first person detected in the largest component at the end of the recorded epidemic. 

The next dataset is the high energy physics theory citation network from the Arxiv publications database \cite{gehrke2003overview} for the period February 1992 to March 2002. The full dataset contains 27,770 papers with 352,807 edges and a graph is generated as follows: if a paper cites another, an edge is made from the former to the latter. Not all of the papers present in the dataset have publication dates and hence we use only those with dates and label cited papers with the date of the oldest citing paper. Taking the largest component, the final resulting graph consists of 15,112 vertices and 193,826 edges. To track the evolution of the connected component we start with the oldest paper, and consider the graphs of connected papers at 1 month intervals, starting with a graph of at least 500 vertices. 

The final dataset, called \texttt{Bemol}, was first introduced in \cite{richard10} and corresponds to the purchase history of an e-commerce website over a period of almost two years. The initial dataset is a bipartite graph between users and products composed of more than 700,000 users and 1,200,000 products. In the current experiment we focus on the first 10,000 users, and a graph is constructed between users with edge weights corresponding to the number of commonly purchased products between two users. Taking a maximum of 500 purchases per iteration in the graph sequence we focus on graphs 500 to 600. 

To test the clustering methods we run each on the sequences of evolving graphs under a variety of parameters. As the selection of the number of clusters is a complex issue and outside the scope of this paper and we manually choose this value for each dataset. The experiment is run using $k = 25$ clusters for  \texttt{HIV}, $k=50$ for \texttt{Citation} and $k=100$ for \texttt{Bemol}. For \texttt{HIV} and \texttt{IASC}, $\ell \in \{25, 50, 100\}$ and $R = 10$ and for \texttt{Ning} we recompute exact eigenvectors after every $10$ iterations. When using \texttt{Citation} and \texttt{Bemol}, eigenvector recomputations are performed every 20 iterations and  $\ell \in \{100, 200, 500\}$. With \texttt{Nystr\"{o}m} the number of columns is chosen from $m \in \{1000, 1500\}$ for \texttt{HIV} and $m \in \{2000, 5000\}$ for the other datasets. Finally, we apply the randomised SVD method with $q=2$ and $r \in \{1000, 1500\}$ for \texttt{HIV} and $r \in \{2000, 5000\}$ for the remaining datasets. Note that in our implementation of $k$-means clustering, $k$ represents an upper bound on the number of clusters found. To evaluate the learned clusters we use measures of \emph{modularity} and \emph{$k$-way normalised cut}. Let $\dv_i = \sum_j \Wm_{ij}$ and $r = \sum_i \dv_i$, then the modularity is defined as 
\begin{displaymath} 
Q = \frac{1}{2r} \sum_{i,j} \left(\Wm_{ij} - \frac{\dv_i \dv_j}{2r}\right) \delta(\cv_i, \cv_j),
\end{displaymath} 
where $\cv \in \mathbb{R}^{n}$ is the cluster indicator vector and $\delta$ is the Kronecker delta function. Intuitively modularity is the difference in the sum of edges weights within a cluster and the expected edge weights assuming the same weight distribution $\dv$ for each vertex. The $k$-way normalised cut is
\begin{displaymath}
N = \frac{1}{k} \sum_{\ell=1}^k \frac{\sum_{ij} \Wm_{ij} \delta(\cv_i, \cv_\ell) (1 - \delta(\cv_j, \cv_\ell))}{\sum_{ij} \Wm_{ij}\delta(\cv_i, \cv_\ell)}. 
\end{displaymath} 
A cut between two clusters $A$ and $B$ is the sum of the weights between the clusters and the normalised cut is this sum divided by the sum of the weights of all edges incident to vertices in cluster $A$. Hence the $k$-way normalised cut is the mean normalised cut between each cluster and its complementary vertices. To summarise, the greater the modularity, the better, and the lower the  $k$-way normalised cut the better. 

All experimental code is written in Python and we use an Intel Core i7-2600K at 3.40GHz with 16GB of RAM to conduct the simulations. The Laplacian matrices are stored in compressed sparse row representation and eigenvectors are found using Implicitly Restarted Lanczos Method in ARPACK \cite{lehoucq1998arpack} which computes only the required eigenvectors and not the full eigen-decomposition. 

\subsubsection{Results} 

\begin{figure}
\begin{center}
\subfigure[\texttt{HIV} modularity]{\includegraphics[width=\subfigsize\linewidth]{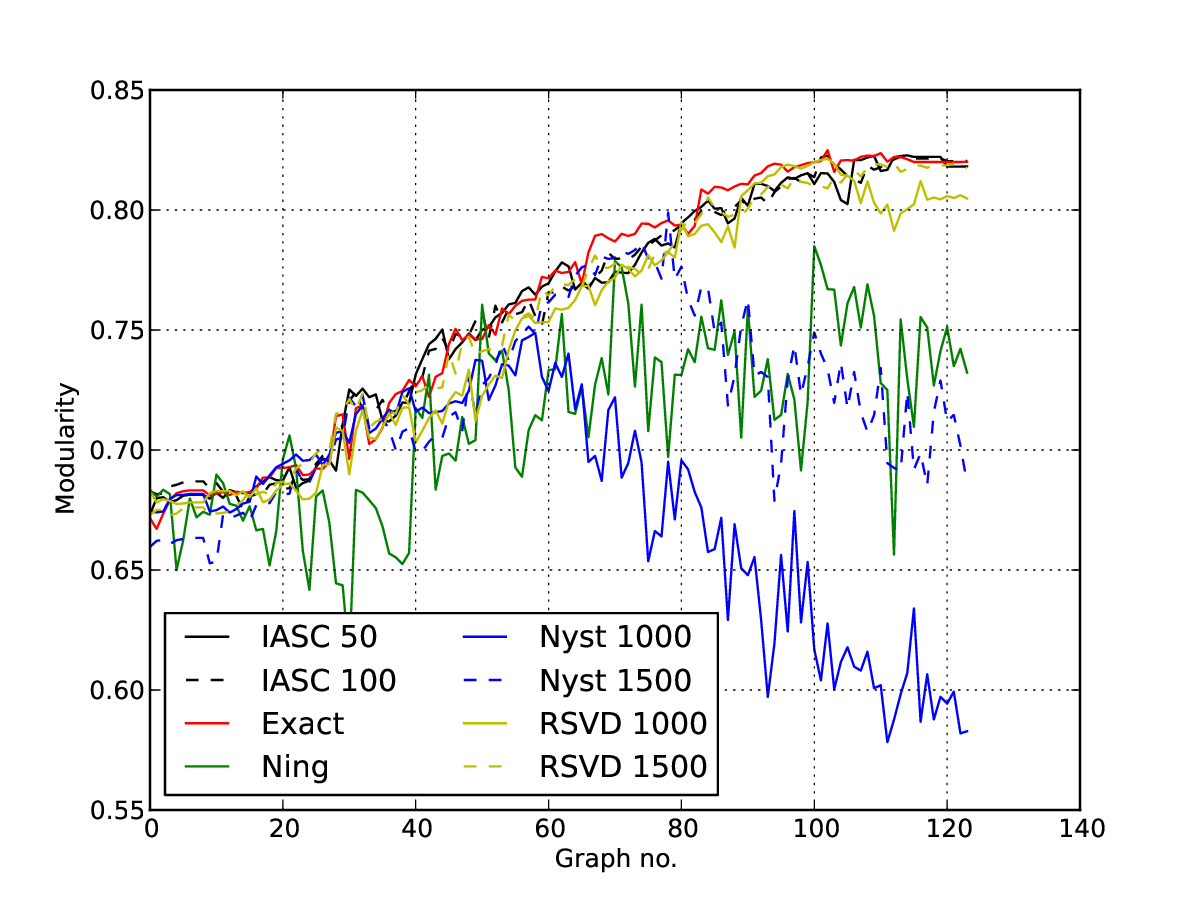}}%
\subfigure[\texttt{HIV} $k$-way normalised cut]{\includegraphics[width=\subfigsize\linewidth]{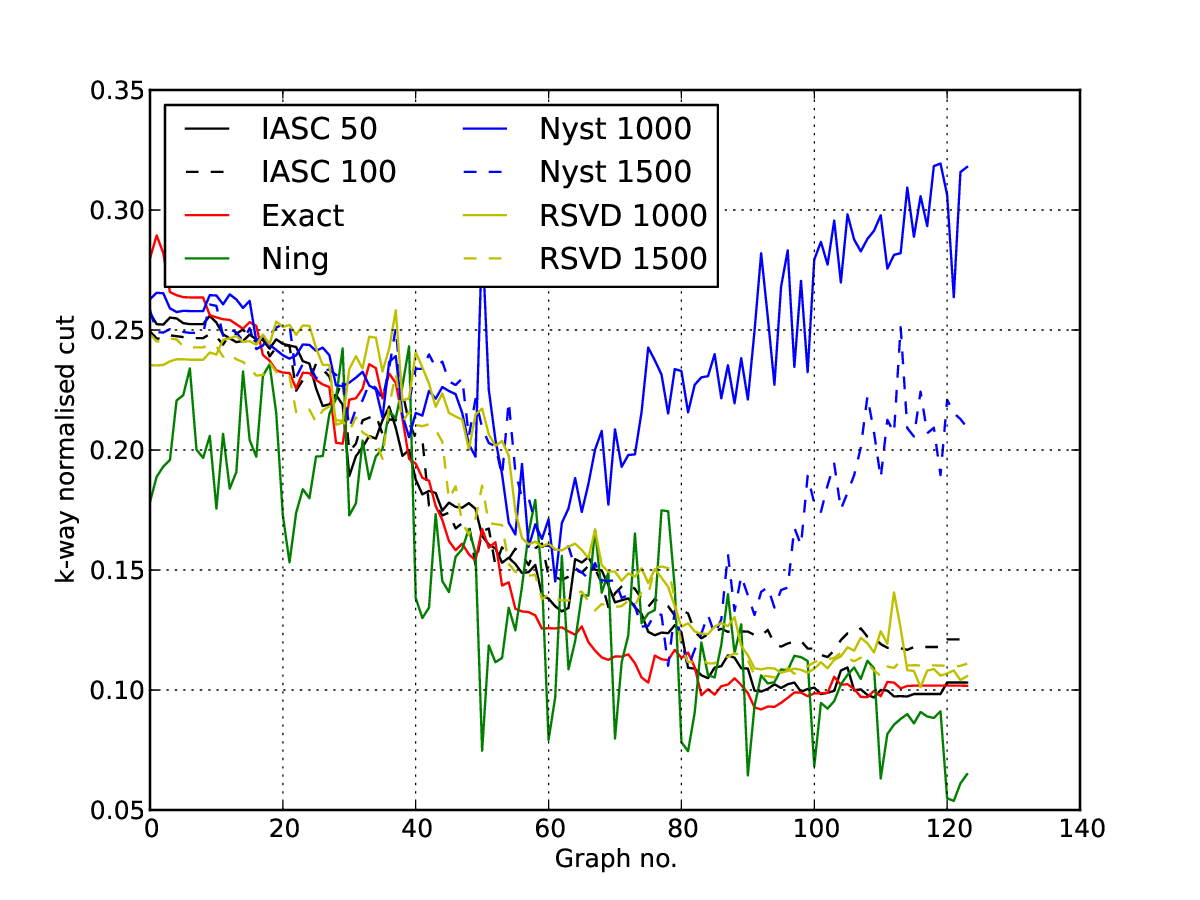}}
\subfigure[\texttt{Citation} modularity]{\includegraphics[width=\subfigsize\linewidth]{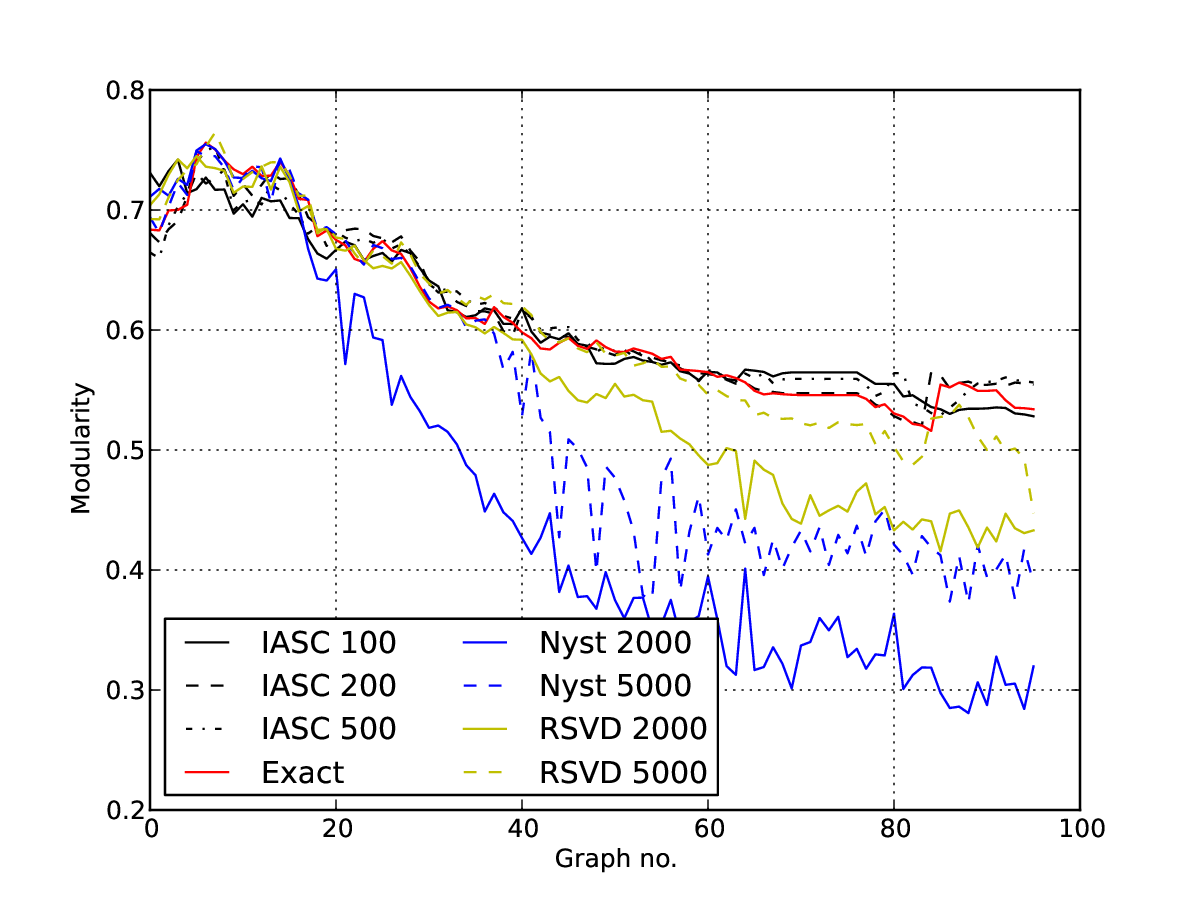}}%
\subfigure[\texttt{Citation} $k$-way normalised cut]{\includegraphics[width=\subfigsize\linewidth]{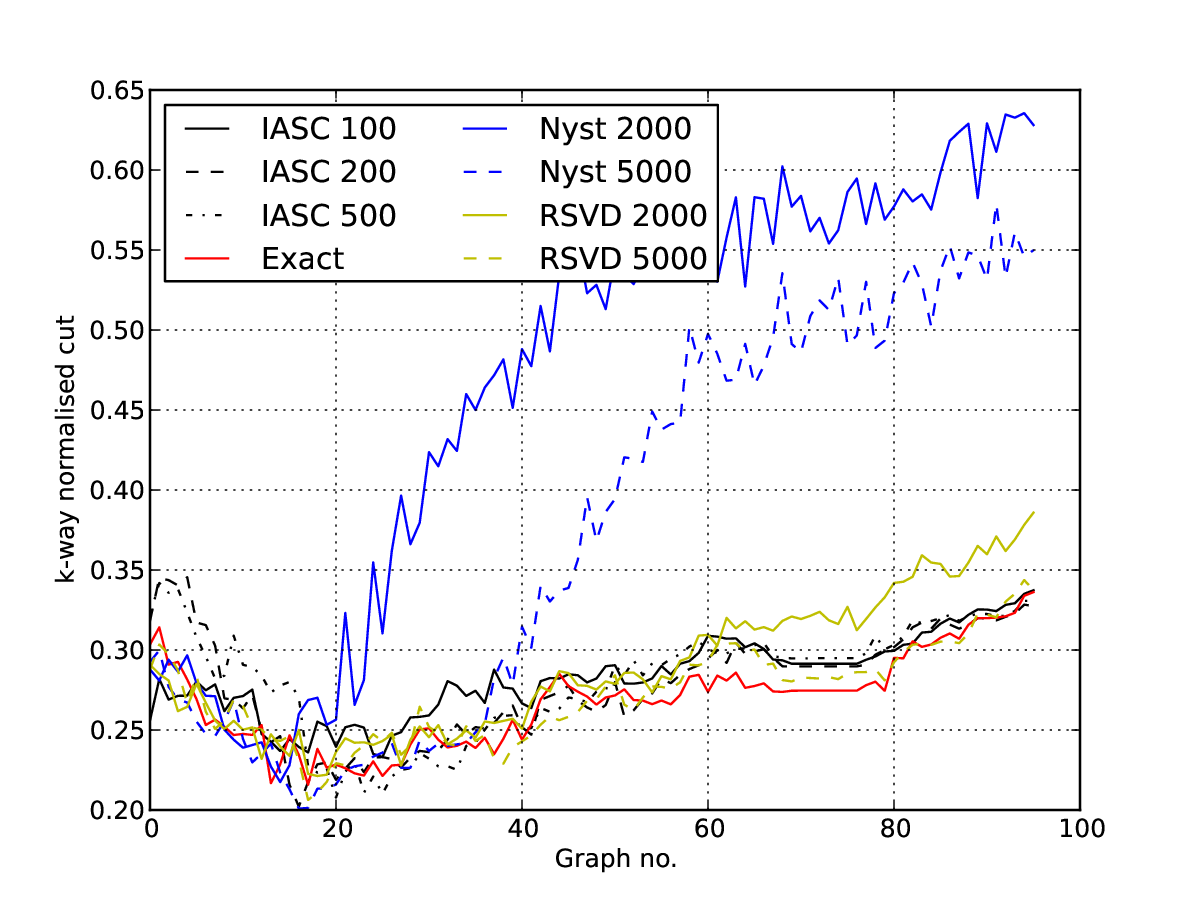}}
\subfigure[\texttt{Bemol} modularity]{\includegraphics[width=\subfigsize\linewidth]{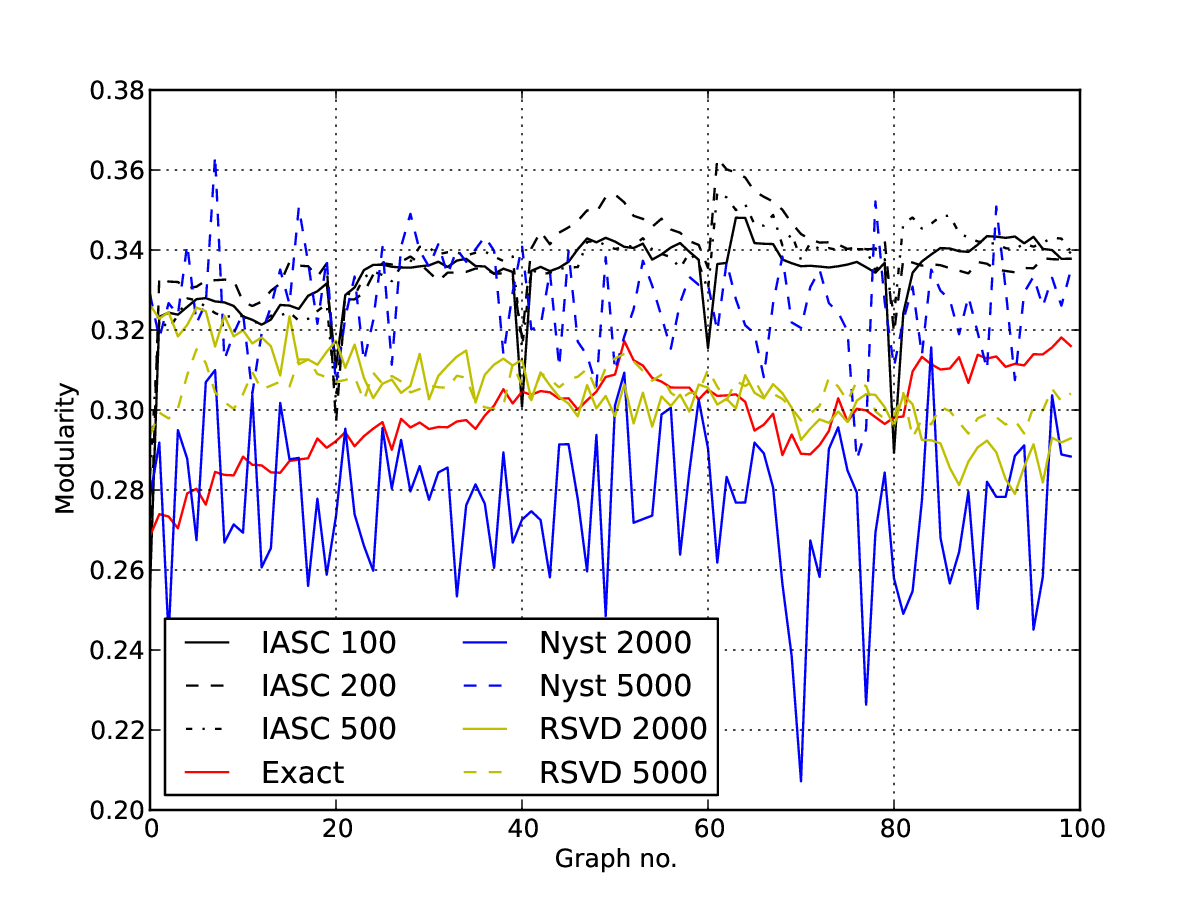}}%
\subfigure[\texttt{Bemol} $k$-way normalised cut]{\includegraphics[width=\subfigsize\linewidth]{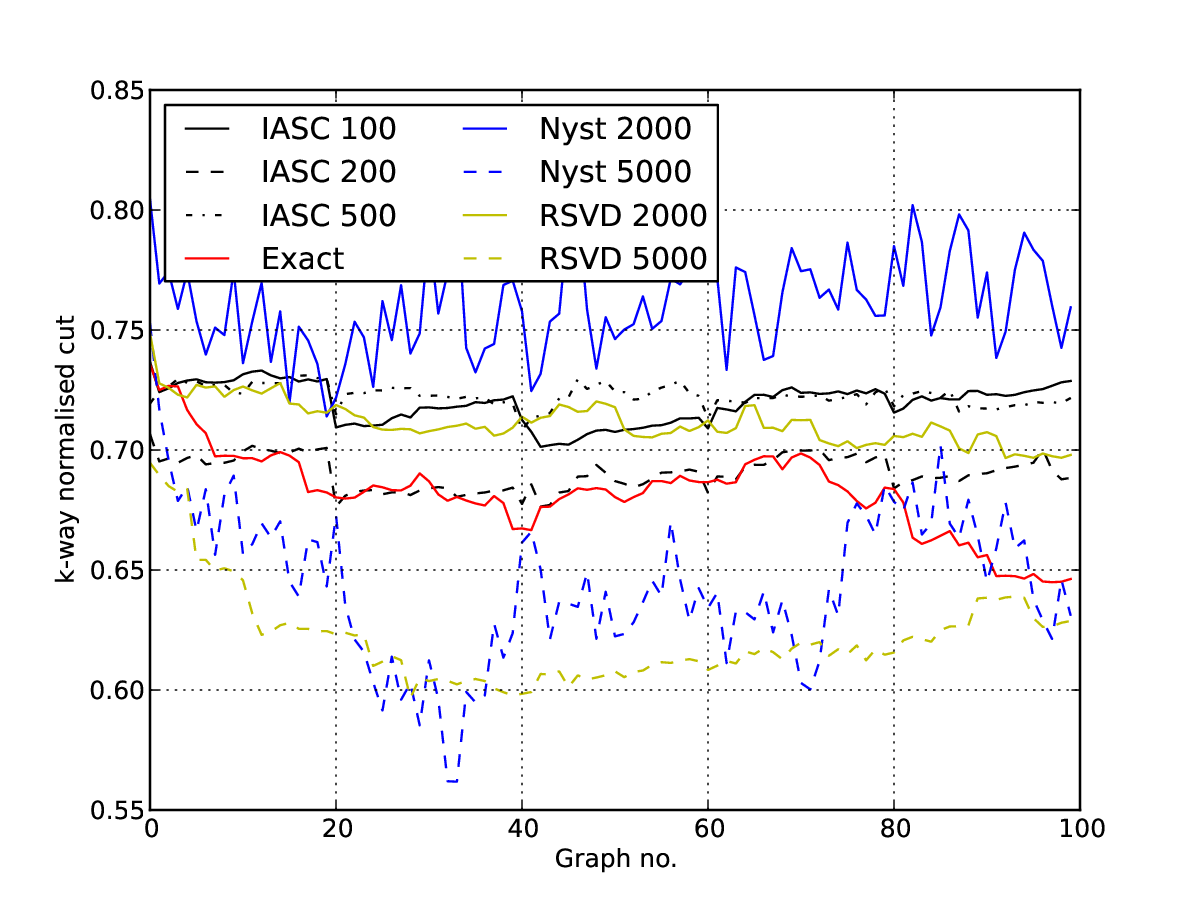}}
\end{center} 
\caption{The performance of the clustering methods on the sequences of changing graphs.}
\label{fig:RealDataResults}
\end{figure}

Figure \ref{fig:RealDataResults} shows the resulting modularities and $k$-way normalised cuts for all datasets however we begin by studying $\texttt{HIV}$. For both \texttt{IASC} and \texttt{Ning}, since eigenvectors are recomputed every 10 iterations, this can manifest itself as sudden changes in the modularities and $k$-way normalised cuts. These changes are more pronounced with \texttt{Ning}.  We see a close correspondence of \texttt{IASC} and \texttt{exact} for both the modularity and normalised cut. As we observed with the toy datasets, a lower value of $\ell$ seems to improve results with the final graph having a cut of $0.09$ with \texttt{IASC} $\ell=100$ versus $0.10$ for \texttt{exact}. Notice that \texttt{IASC} matches or improves results on \texttt{exact}, while keeping only 5\% or fewer of the final number of eigenvectors. \texttt{Ning} fares badly in terms of the modularity of the resulting clustering with a value of 0.73 versus 0.82 for the \texttt{exact} approach at the final graph. However, with the cut measure \texttt{Ning} provides the best clustering, albeit with a more unstable curve than the other methods. \texttt{Nystr\"{o}m} does not provide a good approximation of the largest eigenvectors when the rank of the Laplacian exceeds the number of columns sampled. In contrast, \texttt{RSVD} is broadly competitive with \texttt{IASC} when using 1500 random projections. One of the reasons for the effectiveness of \texttt{RSVD} is Step \ref{step:power} of Algorithm \ref{alg:randomSVD} which helps to ensure the column space of $\Ym$ is close to that generated using the largest eigenvectors of $\Am$, see \cite{halko2011finding} for further details. 

A similar picture emerges with \texttt{Citation} and we see again that \texttt{IASC} is close to \texttt{exact} in terms of both measures. Note that it was too costly to compute clustering using \texttt{Ning} on this dataset and \texttt{Bemol}. We ran \texttt{Ning} on \texttt{Citation} for 337,920s before terminating the experiment: a computational time of at least 7.5 times more than the next most costly approach of \texttt{RSVD}, $r = 5000$, which took 44,880s. On \texttt{Citation}, when $\ell\in\{200, 500\}$ we obtain a close match to \texttt{exact} in general. The results are impressive when we consider the change in the graphs between eigenvector recomputations: the first graph is of size 555, and the 19th is 2855, an increase of 2300. With the 60th graph there are 10,063 vertices and 12,135 at the 79th. Looking at the \texttt{Nystr\"{o}m} curves, we again observe poor clustering performance even when $m=5000$. Furthermore, \texttt{RSVD} $r = 5000$ can compete well with the \texttt{exact} method particularly when considering the cut measure although the modularity using \texttt{RSVD} suffers after approximately the 60th graph. 

Finally consider the \texttt{Bemol} graphs in which it is difficult to find clear clusters, although they become more distinct over time. This is evident when looking at the \texttt{exact} curves for example: the modularity increases slightly from 0.27 to 0.32 whereas the $k$-way normalised cut falls from 0.74 to 0.65 from beginning to end. In contrast to the other datasets \texttt{exact} is improved upon by both \texttt{IASC} and \texttt{Nystr\"{o}m} (when $m = 5000$) respectively. One of the reasons that the \texttt{Nystr\"{o}m} and \texttt{RSVD} methods are effective on this data is because there are many edges and one can sample them out without affecting the clustering significantly. Note however that as we have seen in the other plots, \texttt{Nystr\"{o}m} is rather unstable compared to the other methods. Furthermore, the eigenvector updates every 20 iterations are noticeable in the cluster measures with \texttt{IASC}.

\begin{figure}[ht]
\begin{center}
\subfigure[\texttt{HIV}]{\includegraphics[width=\subfigsize\linewidth]{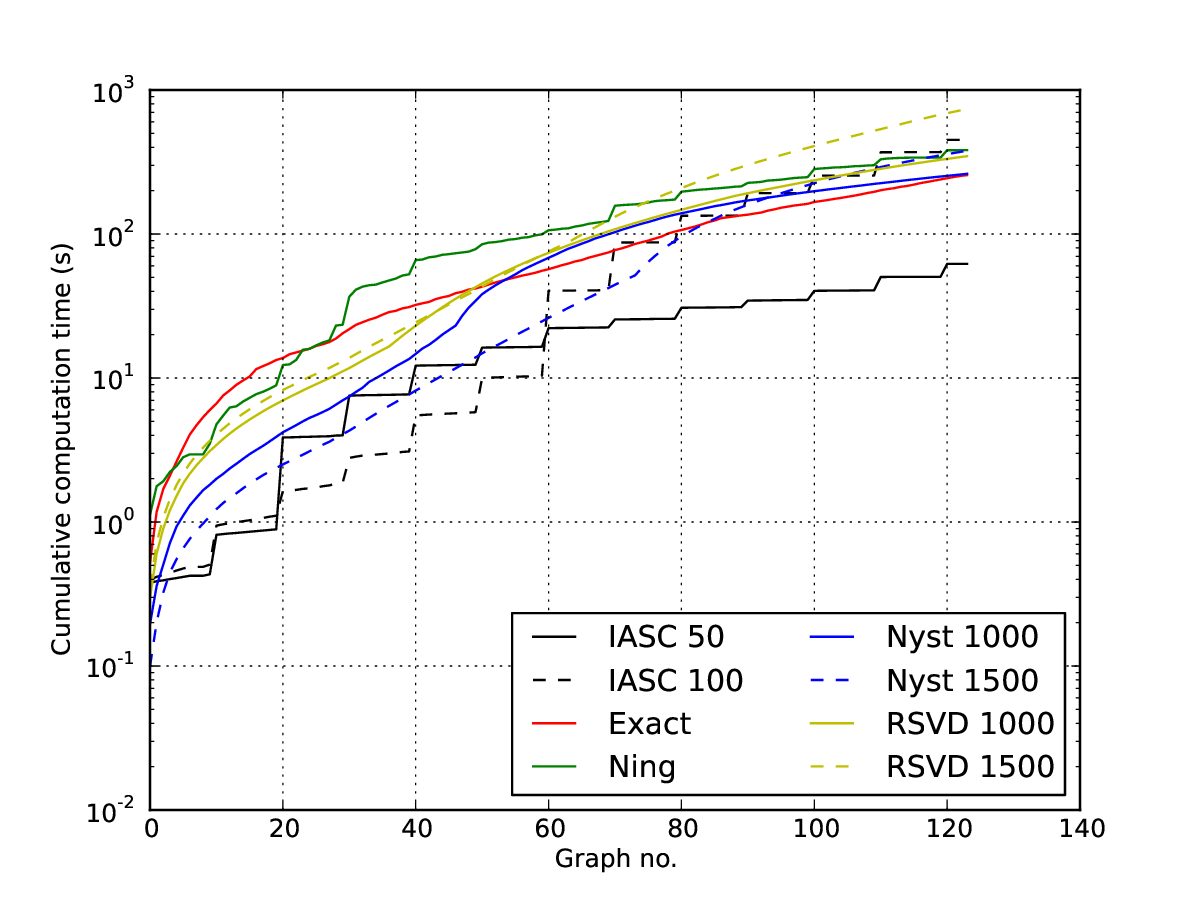}}
\subfigure[\texttt{Citation}]{\includegraphics[width=\subfigsize\linewidth]{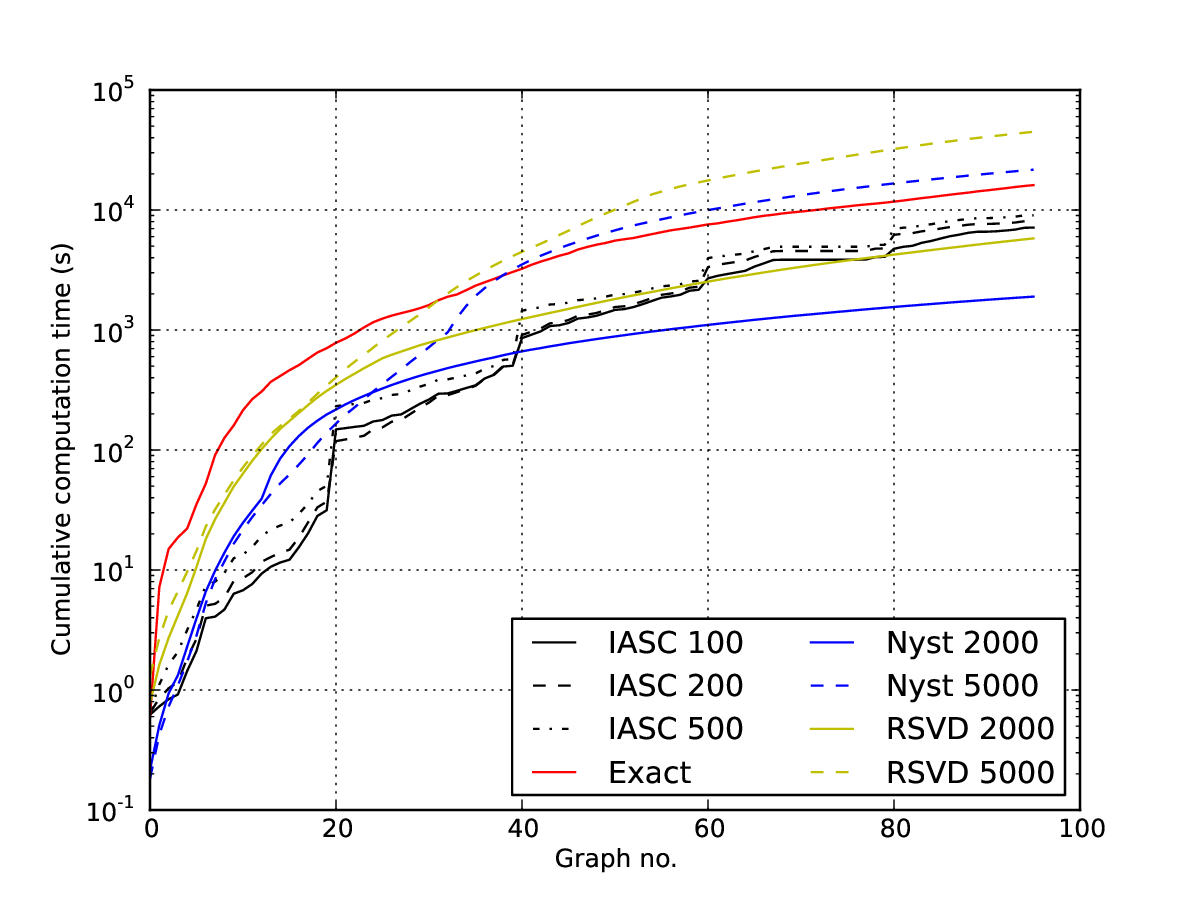}}%
\subfigure[\texttt{Bemol}]{\includegraphics[width=\subfigsize\linewidth]{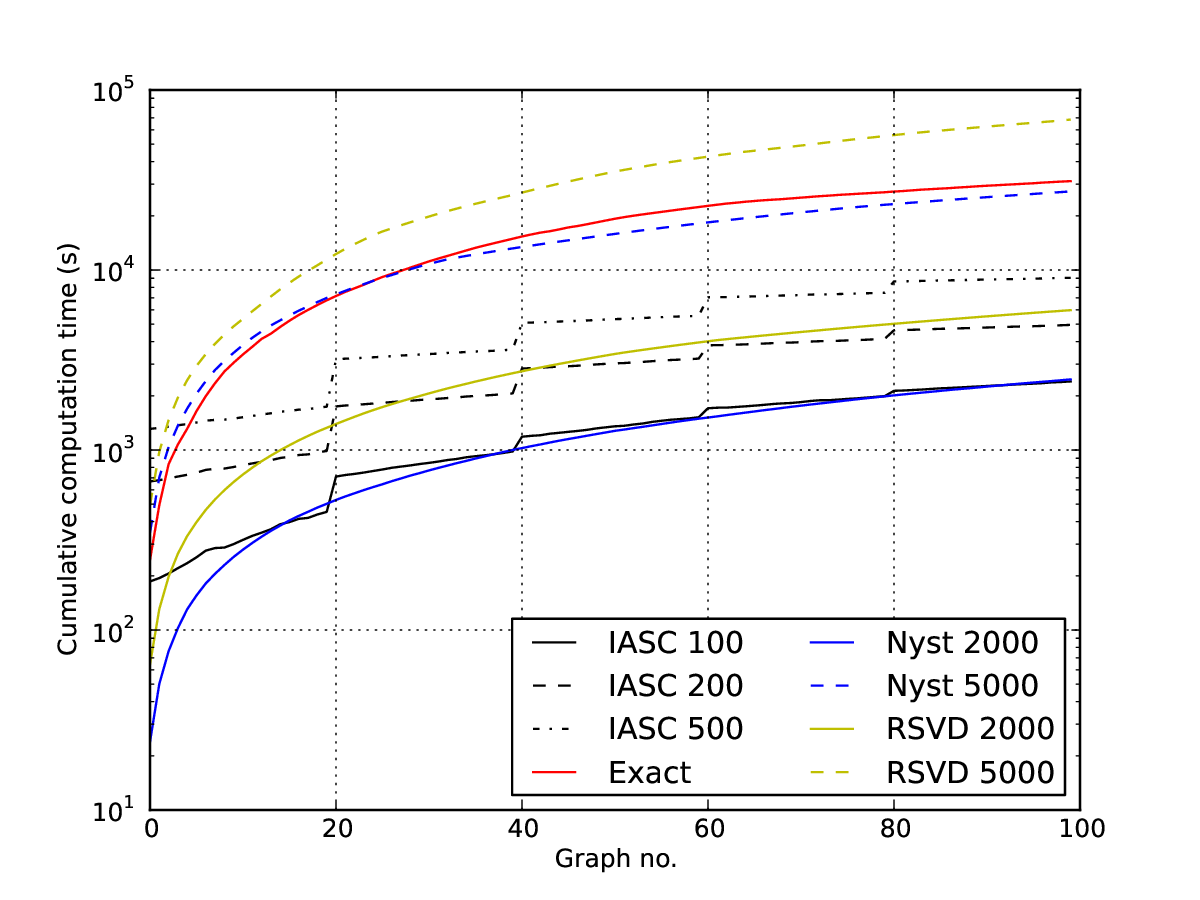}}
\end{center} 
\caption{The cumulative times taken by the eigenvector computations of the clustering methods.}
\label{fig:RealDataTimes}
\end{figure}

To conclude the analysis, Figure \ref{fig:RealDataTimes} shows the timings of the eigenvector computations of the clustering methods for the datasets. With \texttt{HIV} we can make a comparison with \texttt{Ning} and one can see that a cost of the method is a computation time which exceeds that of \texttt{exact}. \texttt{IASC} has a cumulative computation time of 62.0s, \texttt{exact} took 256.4s  compared to 382.1s for \texttt{Ning}. With the \texttt{Bemol} and \texttt{Citation} datasets the Nystr\"{o}m approach costs the least in terms of computation when $m = 2000$ however exceeds the time taken for \texttt{exact} when $m=5000$. In this case the time is dominated by the eigen-decomposition of a matrix in $\mathbb{R}^{m \times m}$. Of note also is that \texttt{RSVD} with $r=5000$ exceeds the time required for \texttt{exact} yet this was the number of random projections required for competitive performance. \texttt{IASC} improves over \texttt{exact} over the whole sequence of graphs as one does not recompute the eigenvectors at every iteration. Notice that the ``staircase'' effect in the \texttt{IASC} curves correspond the computation of the exact eigenvectors. Observe that on \texttt{Bemol}, \texttt{IASC} $\ell = 200$ takes 4,956 seconds in total for eigenvector computations versus 31,182 for \texttt{exact}, a speedup factor of 6.29 for a similar cluster quality. The equivalent improvement is 2.26 on \texttt{Citation}.

To emphasise the conditions in which \texttt{IASC} can be effective, we again cluster over the \texttt{Citation} data however graphs are recorded at 5 day intervals. The parameters are identical to those used above except we set $R= 50$. Figure \ref{fig:citation5day} shows the resulting clustering qualities and eigenvector computation timings for \texttt{IASC} and \texttt{RSVD}. As one might expect we observe that \texttt{RSVD} is competitive to \texttt{IASC}, $\ell = 100$, when $r = 5000$, and only competitive with $r=2000$ until approximately the 400th graph. The timings however show that \texttt{IASC} is faster than \texttt{RSVD} for all values of $\ell$ over all graphs. Furthermore, when $\ell = 100$ \texttt{IASC} took 9640 seconds compared to 32,006 and 158,510 seconds with \texttt{RSVD} using $r$ values for $2000$ and $5000$ respectively. 

\begin{figure}
\begin{center}
\subfigure[\texttt{$k$-way normalised cut}]{\includegraphics[width=0.45\linewidth]{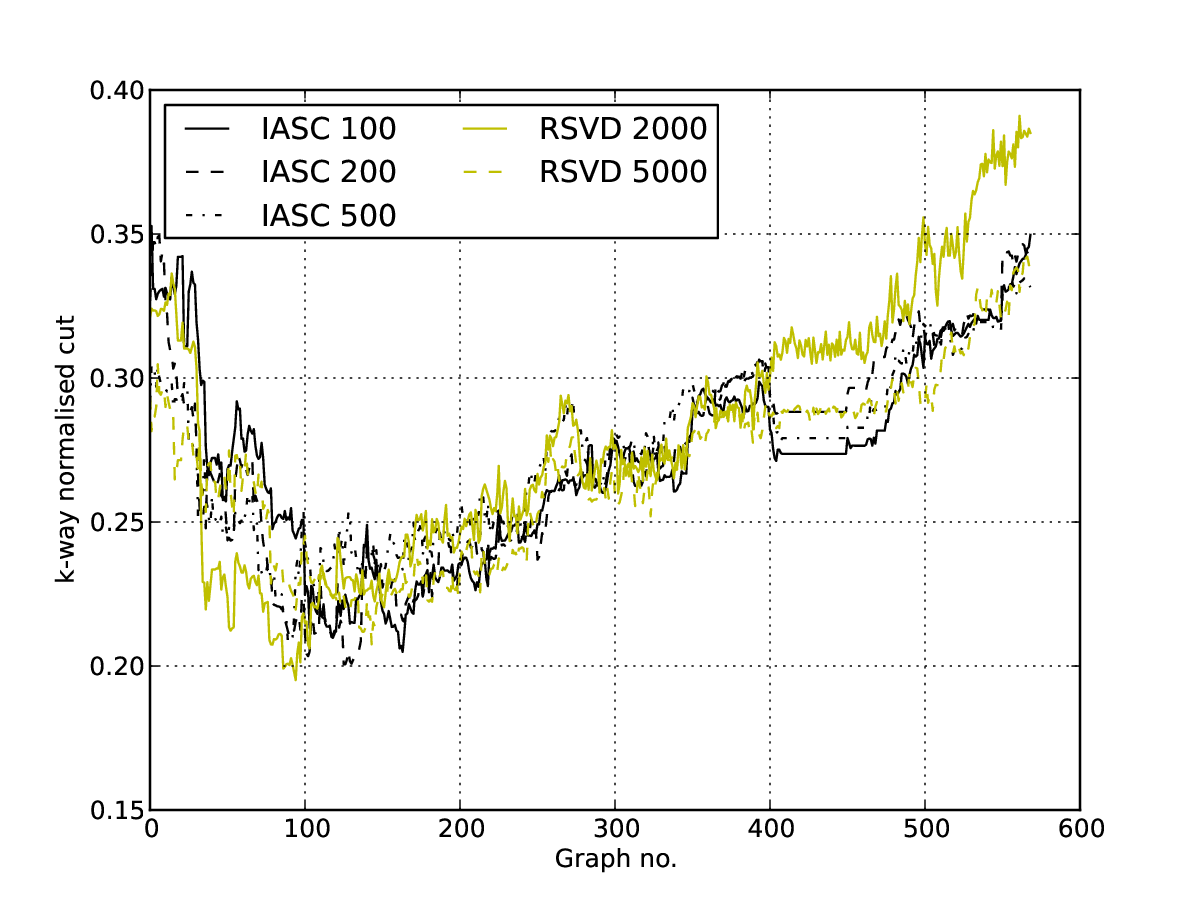}}
\subfigure[\texttt{Cumulative time}]{\includegraphics[width=0.45\linewidth]{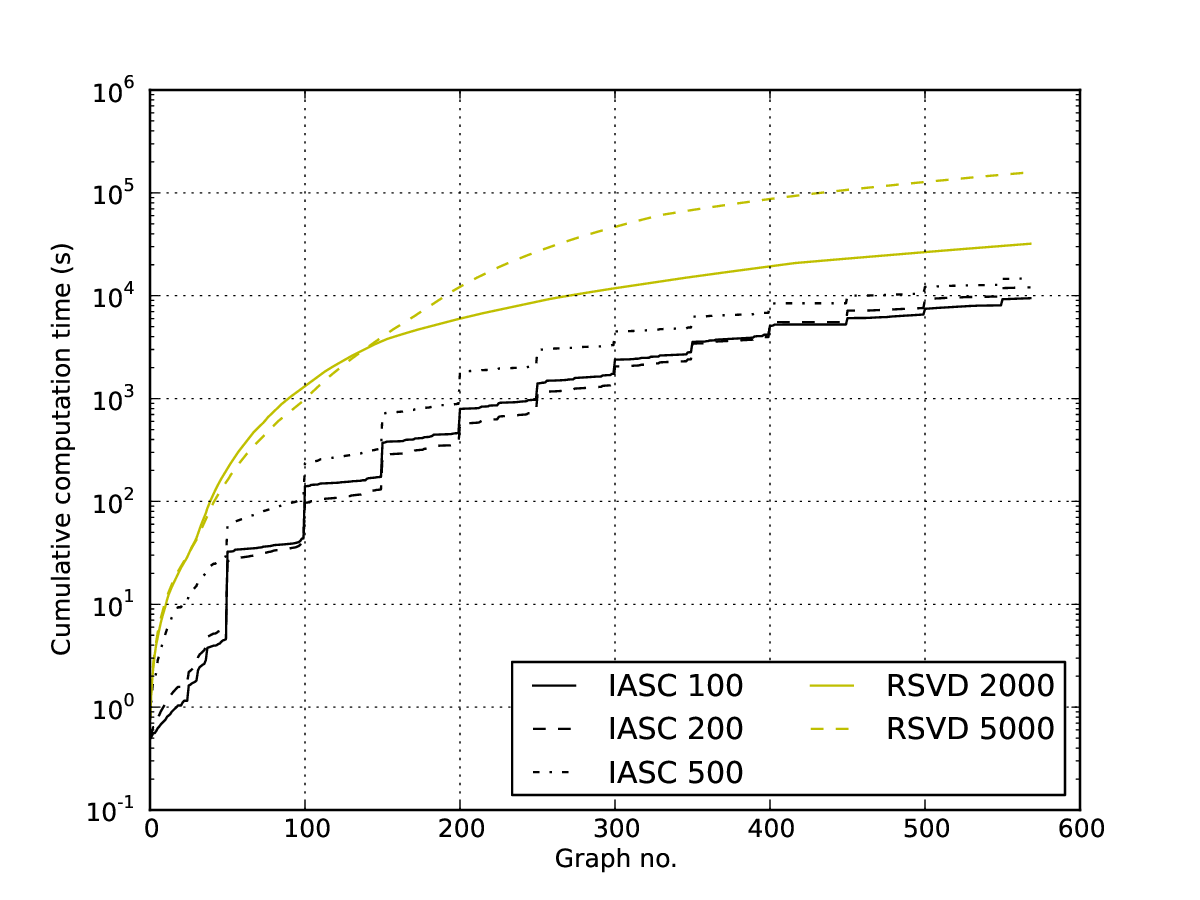}}%
\end{center} 
\caption{The $k$-way normalised cut and cumulative times taken by the eigenvector computations on \texttt{Citation} using 5 day intervals.}
\label{fig:citation5day}
\end{figure}

\section{Discussion}\label{sec:conclusion}

We have presented a novel incremental method for spectral graph clustering which updates the eigenvectors of the Laplacian in a computationally efficient way. Such an algorithm is useful for finding clusterings in time evolving graphs such as biological and social networks, and the Internet. A key part of the algorithm is a general way to approximate the first $k$ eigenvectors of a perturbed symmetric matrix given the eigenvectors of the original matrix. The resulting clustering algorithm, IASC, can be easily implemented using a standard linear algebra library. 

We analysed IASC in both theoretical and empirical respects. Using perturbation theory, we showed when the canonical angles between the real and approximate subspaces generated by our update algorithm are close. Furthermore, IASC is examined empirically relative to the computation of exact eigenvectors for each graph, the method of Ning et al., randomised SVD, and the Nystr\"{o}m approach. On 2 toy and 3 real datasets we show that IASC can often match the cluster accuracy of the exact approach using a small fraction of the total number of eigenvectors and at a much reduced computational cost. 

This work has opened up several perspectives for further study. The first is the analysis of the update of eigenvectors for a modularity matrix and other cluster quality criteria. As we have shown, the quality of the updates would depend on the spectrum of the matrices in question. Another interesting line of research is the issue of how to choose the number of clusters in the time evolving graphs. 


\bibliographystyle{plain}
\bibliography{references}

\end{document}